\documentclass{article}


\usepackage{defs}
\usepackage[decisionutilitycolor,ongrid,compact]{influence-diagrams}


    \usepackage[final]{neurips_2024}


\usepackage[utf8]{inputenc} 
\usepackage[T1]{fontenc}    
\usepackage{url}            
\usepackage{booktabs}       
\usepackage{amsfonts}       
\usepackage{nicefrac}       
\usepackage{microtype}      
\usepackage{xcolor}         

\bibliographystyle{plainnat}

\usepackage{mathrsfs} 
\usepackage{amsmath}
\usepackage{amssymb}
\usepackage{mathtools}
\usepackage{algorithm}
\usepackage{algpseudocode}
\usepackage{bm}
\usepackage{refcount}

\usepackage{amsthm}
\usepackage{thmtools}
\usepackage{thm-restate}

\usepackage{hyperref}
\usepackage[noabbrev,capitalize]{cleveref}

\declaretheorem[numberwithin=section]{proposition, lemma, theorem, corollary}
\declaretheorem[style=definition, numberwithin=section]{definition}
\declaretheorem[style=definition]{example}
\declaretheorem[style=definition, numberwithin=section]{ problem}

\usepackage{tikz}
\usetikzlibrary{arrows,automata,positioning}
\usepackage{fontawesome5}
\usepackage{tikzlings}
\usepackage{tikzducks}
\usepackage{subcaption}
\usepackage{multicol,tikz,xcolor,amsmath}

\usepackage{algorithm}
\usepackage{algpseudocode}

\DeclareMathOperator{\meg}{MEG}
\DeclareMathOperator{\att}{att}

\DeclareMathOperator{\logsumexp}{log sum exp}
\def\E{\mathbb{E}}

\DeclareMathOperator{\argmax}{argmax}
\newcommand*{\me}{\textnormal{maxent}}
\DeclareMathOperator{\dm}{dom}

\title{Measuring Goal-Directedness}

\author{
  Matt MacDermott \\
  Imperial College London
  \And
  James Fox\\
  University of Oxford \\
  London Initiative for Safe AI
  \And Francesco Belardinelli \\
  Imperial College London
  \And 
  Tom Everitt \\
  Google DeepMind
  }

\begin{document}

\maketitle

\begin{abstract}
We define \emph{maximum entropy goal-directedness (MEG)}, a formal measure of goal-directedness in causal models and Markov decision processes,
and give algorithms for computing it.
Measuring goal-directedness is important, as it is a critical element of many concerns about harm from AI.
It is also of philosophical interest, as goal-directedness is a key aspect of agency.
MEG is based on an adaptation of the maximum causal entropy framework used in inverse reinforcement learning.
It can measure goal-directedness with respect to a known utility function, a hypothesis class of utility functions, or a set of random variables. 
We prove that MEG satisfies several desiderata and demonstrate our algorithms with small-scale experiments.\footnote{A notebook that can be used to apply our algorithms in your own tabular MDPs is available at \url{https://colab.research.google.com/drive/1p_RA7EeN6DWx6x3M4v45R6L84koa4rf_}}
\end{abstract}

\section{Introduction}
\label{sec: intro}

In order to build more useful AI systems, a natural inclination is to try to make them more \emph{agentic}.
But while agents built from language models are touted as the next big advance \citep{wang2024survey}, agentic systems have been identified as a potential source of individual \citep{dennett2023problem}, systemic \citep{chan2023harms,gabriel2024ethics}, and catastrophic \citep{ngo2022alignment} risks. Agency is thus a key focus of behavioural evaluations \citep{shevlane2023model,phuong2024evaluating} and governance \citep{shavitpractices}. Some prominent researchers have even called for a shift towards designing explicitly non-agentic systems \citep{dennett2017bacteria, bengio}.

A critical aspect of agency is the ability to pursue goals. 
Indeed, the \emph{standard theory of agency} defines agency as the capacity for intentional action -- action that can be explained in terms of mental states such as goals \citep{sep-agency}. 
But when are we justified in ascribing such mental states? 
According to Daniel Dennett's instrumentalist philosophy of mind \citeyearpar{dennett1989intentional}, we are justified when doing so is useful for predicting a system's behaviour.

This paper's key contribution is a method for formally measuring goal-directedness based on Dennett's idea.
Since pursuing goals is about having a particular causal effect on the environment.\footnote{This is a common but not universal view in philosophy, see \cite{sep-decision-causal}.} defining our measure in a causal model is natural.
Causal models are general enough to encompass most frameworks popular among ML practitioners, such as single-decision prediction, classification, and regression tasks, as well as multi-decision
(partially observable) Markov decision processes. 
They also offer enough structure to usefully model many ethics and safety problems \citep{everitt2021agent,ward2024honesty,richens2022counterfactual,richens2024robust,everitt2021reward,ward2024reasons,halpern2018towards,wachter2017counterfactual,kusner2017counterfactual,kenton2023discovering,fox2023imperfect,richens2024robust,macdermott2023characterising}.

Maximum entropy goal-directedness (MEG) operationalises goal-directedness as follows, illustrated by 
the subsequent running example.

\begin{quote}
   A variable $D$ in a causal model is {\em goal-directed} with respect to a utility function $\mathcal{U}$ to the extent that the conditional probability distribution of $D$ is well-predicted by the hypothesis that $D$ is optimising $\mathcal{U}$.
\end{quote}

\begin{figure}
  \centering
  \subcaptionbox{\textbf{System of interest}}{
      \resizebox{.85\width}{!}{

        \begin{tikzpicture}

          \draw (0,0) -- (3,0);
          \draw (0,1) -- (3,1);

          \draw (0,0) -- (0,1);
          \draw (3,0) -- (3,1);

          \draw (1,0) -- (1,1);
          \draw (2,0) -- (2,1);

          \node[scale=0.4] at (1.5,.5) {\tiny{\tikz\mouse;}};
        
          \node[scale=2] at (2.5,.5) {\textcolor{yellow}{\faCheese}};
          
        \end{tikzpicture}}

      \label{subfig:gridworld}

    }
    {\bigrightArrow[10pt]}
    \subcaptionbox{\textbf{Model as causal Bayesian network}}[2.5cm]{

      \resizebox{.9\width}{!}{

    \begin{influence-diagram}

        
          \node (O) [] {$S$};
          \node (D) [below = of O] {$D$};
          \node (T) [right = of O] {$T$};

          \edge {O} {D};
    
          \edge {O} {T};

          \edge {D} {T};

        \end{influence-diagram}}

      \label{subfig:CBN}

    }
    {\bigrightArrow[10pt]}
    \subcaptionbox{\textbf{Posit decision and utility variables}}[2.7cm]{
    
          \resizebox{.9\width}{!}{

        \begin{influence-diagram}


    

          \node (O) [] {$S$};
          \node (D) [below = of O, decision] {$D$};
          \node (T) [right = of O] {$T$};
          \node (U) [utility, right = of D] {$\bm{U}_{\theta}$};

          \edge {T} {U};

          \edge {O} {D};
    
          \edge {O} {T};

          \edge {D} {T};

        \end{influence-diagram}    }

      \label{subfig:cid}

    }
    {\bigrightArrow[10pt]}
    \subcaptionbox{\textbf{Measure maximum predictive accuracy}\label{subfig:meg}}[3.1cm]{
        \resizebox{.9\width}{!}{

     \begin{influence-diagram}


    

          \node (O) [] {$S$};
          \node (D) [below = of O, decision] {$D$};
          \node (T) [right = 2cm of O] {$T$};
          \node (U) [utility, below = of T] {$\bm{U}_{\theta}$};

          \edge {T} {U};

          \edge {O} {D};

          \edge {O} {T};

          \edge {D} {T};



        \path (U) edge[->, blue, dashed, bend left=20] (D);

        \begin{scope}[
            every node/.style = {draw=none, blue, rectangle, node distance=1mm}]
            \footnotesize
            \node [below left = of U] {$Predict?$};

        \end{scope}

        \end{influence-diagram}
        }
    }
  \caption{Computing maximum entropy goal-directedness (MEG). \label{fig:mouse}}
\end{figure}
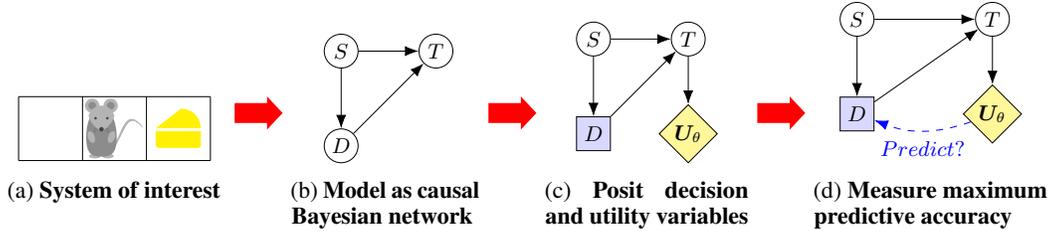

\begin{example}
    A mouse begins at the centre of a gridworld (\Cref{fig:mouse}a). It observes that a block of cheese is located either to the right or left ($S$) with equal probability, proceeds either away from it or towards it ($D$), and thus either obtains the cheese or does not ($T$).
    \label{ex:mouse}
\end{example}

Suppose that the mouse moves left when the cheese is to the left and right when it is to the right, thus consistently obtaining the cheese. Intuitively, this behaviour seems goal-directed, but can we quantify how much? \Cref{fig:mouse} gives an overview of our procedure. We first model the system of interest as a causal Bayesian network (\cref{fig:mouse}b) with variables $S$ for the cheese's position, $D$ for the mouse's movement, and $T$ for whether or not the mouse obtains the cheese. Then, we identify a candidate decision variable $D$ and target variable $T$, hypothesising that the mouse optimises a utility function that depends on $T$ (\cref{fig:mouse}c).
Finally, we form a model of what behaviour we should expect from $D$ if it is indeed optimising $U$ and then measure how well this model predicts $D$'s observed behaviour (\cref{subfig:meg}).

\citet{ziebart2010thesis}'s maximum causal entropy (MCE) framework suggests a promising way to construct a model for expected behaviour under a given utility function. However, there are several obstacles to applying it to our problem: it cannot measure the predictive usefulness of \emph{known} utility functions, and
it only finds the most predictive \emph{linear} utility function. In practice, arbitrary known utility functions can be plugged in, but the results are \emph{not scale-invariant}. We overcome these difficulties by returning to first principles and deriving an updated version of the MCE framework. 

In summary, our contributions are four-fold. We
(i) adapt the MCE framework to derive \emph{maximum entropy goal-directedness} (MEG), a philosophically-motivated measure of goal-directedness with respect to known utility functions, and show that it satisfies several key desiderata (\cref{sec:ku meg}); 
(ii) we extend MEG to measure goal-directedness in cases without a known utility function (\cref{sec:uu meg}); 
(iii) we provide algorithms for measuring MEG in MDPs (\cref{sec: algorithms}); and
(iv) we demonstrate the algorithms empirically 
(\cref{sec:experiments}).

\paragraph{Related Work.}
\label{sec:related work}

Inverse reinforcement learning (IRL) \citep{ng2000algorithms} focuses on the question of \emph{which} goal a system is optimising, whilst we are interested in \emph{to what extent} the system can be seen as optimising a goal.
We are not the first to ask this question.
Some works take a Bayesian approach inspired by Dennett's intentional stance.%
\footnote{After accepted publication, another related paper appeared \citep{xu2024towards}.}
Most closely related to our work is \citet{orseau2018agents}, which applies Bayesian IRL in POMDPs using a Solomonoff prior over utility functions and an $\varepsilon$-greedy model of behaviour.
This lets them infer a posterior probability distribution over whether an observed system is a (goal-directed) "agent" or "just a device".
Our approach distinguishes itself from these by considering arbitrary variables in a causal model and deriving our behaviour model from the principle of maximum entropy. Moreover, since our algorithms can take advantage of differentiable classes of utility functions, our approach may be amenable to scaling up using deep neural networks.
\citet{oesterheld2016formalizing} combines the intentional stance with Bayes' theorem in cellular automata, but does not consider specific models of behaviour.
Like us, \cite{kenton2023discovering} consider goal-directedness in a causal graph. 
However, they require variables to be manually labelled as \emph{mechanisms} or \emph{object-level}, and only provide a binary distinction between agentic and non-agentic systems (see \cref{sec: discovering-agents} for a detailed comparison).
\citet{biehl2022interpreting,virgo2021interpreting} propose a definition of agency in Moore machines based on whether a system's internal state can be interpreted as beliefs about the hidden states of a POMDP.

\section{Background}
\label{sec:background}

We use capital letters for random variables $V$, write $\dm(V)$ for their domain (assumed finite), and use lowercase for outcomes $v\in \dm(V)$. 
Boldface denotes sets of variables $\bm V = \{V_1, \dots, V_n\}$, and their outcomes $\bm v\in \dm(\bm V) = \bigtimes_{i} \dm(V_i)$. Parents and descendants of $V$ in a graph are denoted by $\Pa_V$ and $\Desc_V$, respectively (where $\pa_V$ and $\desc_V$ are their instantiations).

Causal Bayesian networks (CBNs) are a class of probabilistic graphical models used to represent causal relationships between random variables \citep{pearl2009causality}.
\begin{definition}[Causal Bayesian network]\label{def: causal bayesian network }
A \emph{Bayesian network} $M = (G, P)$ over a set of variables $\bm{V} =\{V_1, \dots, V_n\}$ consists of a joint probability distribution $P$ which factors according to a directed acyclic graph (DAG) $G$, i.e., $P(V_1, \dots, V_n) = \prod_{i=1}^n P(V_i\mid \Pa_{V_i})$, where $\Pa_{V_i}$ are the parents of $V_i$ in $G$.
A Bayesian network is \emph{causal} if its edges represent direct causal relationships, or formally if the result of an intervention $\text{do}(\bm{X}=\bm{x})$ for any  $\bm{X}\subseteq \bm{V}$ can be computed using the \emph{truncated factorisation formula}: $P(\bm{v}\mid\text{do}(\bm{X}=\bm{x})) = \Pi_{i:v_i \notin \bm{x}} P(v_i\mid \pa_{v_i})$ if $\bm{v}$ is consistent with $\bm{x}$ or $P(\bm{v}\mid\text{do}(\bm{X}=x)) = 0$ otherwise. 

\end{definition}

\Cref{fig:mouse}b depicts \cref{ex:mouse} as a CBN, showing the causal relationships between the location of the cheese ($S$), the mouse's behavioural response ($D$), and whether the mouse obtains the cheese ($T$).

We are interested in to what extent a set of random variables in a CBN can be seen as goal-directed. That is, to what extent can we interpret them as \emph{decisions} optimising a \emph{utility function}? In other words, we are interested in moving from a CBN to a causal influence diagram (CID), a type of probabilistic graphical model that explicitly identifies decision and utility variables.

\begin{definition}[Causal Influence Diagram \citep{everitt2021agent}]
    A \emph{causal influence diagram} (CID) $M=(G, P)$ is a CBN where the variables $\bm{V}$ are partitioned into decision $\bm{D}$, chance $\bm{X}$, and utility variables $\bm{U}$. Instead of a full joint distribution over $\bm{V}$, $P$ consists of conditional probability distributions (CPDs) for each \emph{non-decision} variable $V\in \bm{V}\setminus \bm{D}$.
\end{definition}

A CID can be combined with a \emph{policy} $\pi$, which specifies a CPD $\pi_D$ for each decision variable $D$, in order to obtain a full joint distribution.
We call the sum of the utility variables the \emph{utility function} and denote it 
$
\mathcal{U}=\sum_{U\in\bm{U}} U$. 
Policies are evaluated by their total expected utility
$\E_\pi[\mathcal{U}]$. We write $\pi^{\text{unif}}$ for the uniformly random policy.

CIDs can model a broad class of decision problems, including Markov decision processes (MDPs) and partially observable Markov decision processes (POMDPs) \citep{everitt2021reward}. 

 \begin{example}[POMDP]
A mouse begins at the centre of the grid, with a block of cheese located either at the far left or the far right (\Cref{fig:mouse2}). The mouse does not know its position or the position of the cheese ($S_1$) but can smell which direction the cheese is in ($O_1$) and decide which way to move ($D_1$). Next step, the mouse again smells the direction of the cheese ($O_2$) and again chooses which way to proceed ($D_2$).
\label{ex:mouse2}
 \end{example}

\section{Measuring goal-directedness with respect to a known utility function}
\label{sec:ku meg}

\begin{figure}
    \begin{subfigure}[b]{0.33\linewidth}
        \centering
         \resizebox{.85\width}{!}{

        \begin{tikzpicture}

          \draw (0,0) -- (5,0);
          \draw (0,1) -- (5,1);

          \draw (0,0) -- (0,1);
          \draw (3,0) -- (3,1);

          \draw (1,0) -- (1,1);
          \draw (2,0) -- (2,1);

          \draw (4,0) -- (4,1);
          \draw (5,0) -- (5,1);

          \node[scale=0.4] at (2.5,.5) {\tiny{\tikz\mouse;}};
        
          \node[scale=2] at (4.5,.5) {\textcolor{yellow}{\faCheese}};

        \end{tikzpicture}}

        \caption{Gridworld}
        \label{fig:mouse-gridworld}
    \end{subfigure}
     \begin{subfigure}[b]{0.33\linewidth}
     \centering























         \begin{influence-diagram}
         
            \node (S1) [] {$S_1$};
            \node (S2) [right = 1.8cm of S1] {$S_2$};
            \node (S3) [right = 1.6cm of S2] {$S_3$};

            \node (O1) [above = 10mm of S1] {$O_1$};
            \node (O2) [above = 10mm of S2] {$O_2$};

            \node (D1) [right = 10mm of O1] {$D_1$};
            \node (D2) [right = 10mm of O2] {$D_2$};
            



          \edge {S1} {S2};

          \edge {S2} {S3};

          \edge {S1} {O1};

           \edge {S2} {O2};

          \edge {O1} {D1};

          \edge {O2} {D2};

          \edge {D1} {S2};

          \edge {D2} {S3};




        \end{influence-diagram}


    
    
     




        \caption{Causal Bayesian network}
        \label{subfig:mouse-cbn-real}
    \end{subfigure}
    \begin{subfigure}[b]{0.33\linewidth}
        \centering

        \begin{influence-diagram}

            \node (S1) [] {$S_1$};
            \node (S2) [right = 1.8cm of S1] {$S_2$};
            \node (S3) [right = 1.6cm of S2] {$S_3$};

            \node (O1) [above = 10mm of S1] {$O_1$};
            \node (O2) [above = 10mm of S2] {$O_2$};

            \node (D1) [right = 10mm of O1, decision] {$D_1$};
            \node (D2) [right = 10mm of O2, decision] {$D_2$};

            \node (U1) [utility, below right = 4.5mm and 8mm of S1] {$U_1$};
            \node (U2) [utility, below right = 4.5mm and 8mm of S2] {$U_2$};

            \node (U3) [utility, below right = 4.5mm and 8mm of S3] {$U_3$};

          \edge {S1} {S2};

          \edge {S2} {S3};

          \edge {S1} {O1};

           \edge {S2} {O2};

          \edge {O1} {D1};

          \edge {O2} {D2};

          \edge {D1} {S2};

          \edge {D2} {S3};

          \edge {S1} {U1};
          \edge {S2} {U2};
          \edge {S3} {U3};



        \end{influence-diagram}
        \vspace{-5mm}
        \caption{Causal influence diagram}
        \label{subfig:mouse-mascid-real}
    \end{subfigure}

    \caption{Sequential multi-decision mouse example.
    }
    \label{fig:mouse2}
\end{figure}
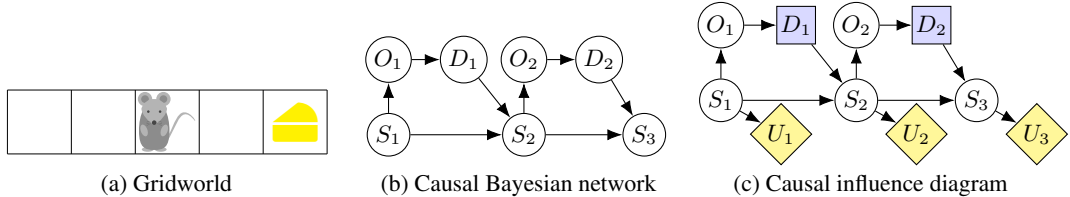

\paragraph{Maximum Entropy Goal-directednes.}
Dennett's instrumentalist approach to agency says that we can ascribe mental states (such as utilities) to a system to the extent that doing so is useful for predicting its behaviour \citep{dennett1989intentional}.
To operationalise this, we need a model of what behaviour is predicted by a utility function. 
According to the \emph{principle of maximum entropy} \citep{jaynes1957information}, we should choose a probability distribution with the highest entropy distribution satisfying our requirements, thus minimising unnecessary assumptions (following Occam's razor). We can measure the entropy of a policy by the expected entropy of its decision variables conditional on their parents $H_\pi(\bm{D}\mid\mid\Pa_D)=-\sum_{D\in\bm{D}}\E_{d,\Pa_D\sim P_\pi}\log \pi_D(d\mid \Pa_D)$. This is \citet{ziebart2010paper}'s \emph{causal entropy}, which we usually refer to as just the entropy of $\pi$.

In our setting, the relevant constraint is expected utility. To avoid assuming that only optimal agents are goal-directed, we construct a set of models of behaviour which covers all levels of competence an agent optimising utility $\mathcal{U}$ could have. We define the set of \emph{attainable expected utilities} in a CID as $\att(\mathcal{U})=\{u\in\mathbb{R}\mid \exists\pi\in\Pi \left(\mathbb{E}_\pi\left[\mathcal{U}\right]=u\right)\}$.

\begin{definition}[Maximum entropy policy set, known utility function]
    Let $M=(G,P)$ be a CID with decision variables $\bm{D}$ and utility function $\mathcal{U}.$ 
    The \emph{maximum entropy policy set for  $u\in \att(\mathcal{U})$} is 
    $\Pi^{\me}_{{\mathcal{U}},u} 
    \coloneqq \argmax_{\pi\mid \E_\pi\left[\mathcal{U}\right] = u}
    H_\pi(\bm{D}\mid\mid\Pa_D).$ 
    The \emph{maximum entropy policy set for $\mathcal{U}$} is the set of maximum entropy policies for \emph{any} attainable expected utility
    $\Pi^{\me}_{{\mathcal{U}}} \coloneqq \bigcup_{u \in \att(\mathcal{U})} \Pi^{\me}_{{\mathcal{U}},u}.$
    \label{def: known meg}
\end{definition}

For each attainable expected utility, $\Pi^{\me}_{\mathcal{U}}$ contains the highest entropy policy which attains it.
In MDPs, this policy is unique $\pi^{\me}_{\mathcal{U},u}$ and can be found with backwards induction
(see \cref{sec: constructing}).

We measure predictive accuracy using cross-entropy, as is common in ML. 
We subtract the predictive accuracy of the uniform distribution, so that we measure predictive accuracy relative to random chance.
This makes MEG always non-negative.

\begin{definition}[Maximum entropy goal-directedness, known utility function]
\label{def:meg}
Let $M\coloneqq(G,P)$ be a CID with decision variables $\bm{D}$ and utility function $\mathcal{U}.$ 
The \emph{maximum entropy goal-directedness} (MEG) of a policy $\pi$ with respect to $\mathcal{U}$ is
\begin{equation}
\label{eq:meg}
\meg_{\mathcal{U}}(\pi)
\coloneqq
\max_{\pi^{\me}\in\Pi^{\me}_{{\mathcal{U}}}}
\E_\pi\left[\sum_{D\in\bm{D}}\left(\log \pi^{\me}(D\mid \Pa_D)-\log\frac{1}{\left|\dm(D)\right|}\right)\right].
\end{equation}

\end{definition}

As we will see, in a Markov Decision Process, this is equivalent to maximising predictive accuracy by varying the rationality parameter $\beta$ in the `soft-optimal' policy $\pi^\text{soft}_\beta$ used in maximum entropy reinforcement learning \citep{haarnoja2017reinforcement}.\footnote{A noteworthy property of \cref{def:meg} is that a policy that \textit{pessimises} a utility function can be just as goal-directed as a policy that \textit{optimises} it. This fits with the intuitive notion that we are measuring how likely a policy's performance on a particular utility function is to be an accident. However, it might be argued that a policy should instead be considered \emph{negatively} goal-directed with respect to a utility function it pessimises. We could adjust \cref{def:meg} to achieve this by multiplying by the sign of $\mathbb{E}_\pi\left[\mathcal{U}\right] - \mathbb{E}_{\pi^{\text{unif}}}\left[\mathcal{U}\right]$, making the goal-directedness of any policy that performs worse than random negative. For simplicity, we use the unsigned measure in this paper.}

If instead of having access to a policy $\pi$, we have access to a set of full trajectories $\{(\pa^i_{D_1},D^i_1,\dots,\pa_{D^i_n},D^i_n)\}_i$, the expectation $\E_\pi$ in \cref{eq:meg} can be replaced with an average over the trajectory set.
This is an unbiased and consistent estimate of $\meg_{\mathcal{U}}(\pi)$ for the policy $\pi$ generating the trajectories.

\paragraph{Example.}
Consider a policy $\pi$ in \cref{ex:mouse} that proceeds towards the cheese with probability $0.8$. How goal-directed is this policy with respect to the utility function $\mathcal{U}$ that gives $+1$ for obtaining the cheese and $-1$ otherwise? 

To compute $\meg_{\mathcal{U}}(\pi)$, we first find the maximum entropy policy set $\Pi^{\me}_\mathcal{U}$, and then take the maximum predictive accuracy with respect to $\pi$. In a single-decision setting, for each attainable expected utility $u$ there is a unique $\pi^{\me}_{\mathcal{U},u}$ which has the form of a Boltzmann policy $\pi_{\mathcal{U},u}^{\me}(d\mid s)
=\frac{\exp\left(\beta\cdot\E\left[U\mid d, s\right]\right)}
{\sum_{d'}\exp\left(\beta\cdot\E\left[U\mid d', s\right]\right)}$ (cf.\ \cref{thm:maxent policy mdp}).
The rationality parameter $\beta=\beta(u)$ can be varied to get the correct expected utility. Predictive accuracy with respect to $\pi$ is maximised by $\pi^{\me}_{\mathcal{U},0.8}$, which has a rationality parameter of $\beta=\log 2$. 
The expected logprob of a prediction of this policy is $\E_\pi \left[\log \pi^{\me}_{\mathcal{U},0.8}(D \mid \Pa_D)\right]= -0.50$, while the expected logprob of a uniform prediction is $\log(\frac{1}{2}) = - 0.69$. 
So we get that $\meg_{\mathcal{U}}(\pi) = -0.50 - (-0.69) = 0.19$. 
For comparison, predictive accuracy for the optimal policy $\pi^*$ is maximised when $\beta=\infty$, and has $\meg_{\mathcal{U}}(\pi^*) = 0 - (-0.69) = 0.69$.

\paragraph{Properties.}
We now show that MEG satisfies three important desiderata. First, since utility functions are usually only defined up to translation and rescaling, a measure of goal-directedness with respect to a utility function should be translation and scale invariant.
MEG satisfies this property:

\begin{restatable}[Translation and scale invariance]{proposition}{invarianceprop}
    \label{prop: invariance}
Let $M_1$ be a CID with utility function $\mathcal{U}_1$,
and let $M_2$ be an identical CID but with utility function $\mathcal{U}_2 = a\cdot\mathcal{U}_1+b$, for some $a,b\in\mathbb{R}$, with $a\neq 0$.
Then for any policy $\pi$, $\meg_{\mathcal{U}_1}(\pi) = \meg_{\mathcal{U}_2}(\pi)$.

\end{restatable}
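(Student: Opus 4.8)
The plan is to show that the affine reparametrisation $\mathcal{U}_2 = a\mathcal{U}_1 + b$ induces a bijection on the objects that define MEG, so that every quantity appearing in \cref{eq:meg} is preserved. The key observation is that $\E_\pi[\mathcal{U}_2] = a\E_\pi[\mathcal{U}_1] + b$ for every policy $\pi$, since expectation is linear. Consequently the map $u \mapsto au+b$ is a bijection from $\att(\mathcal{U}_1)$ to $\att(\mathcal{U}_2)$, and the constraint set $\{\pi \mid \E_\pi[\mathcal{U}_2] = au+b\}$ is literally equal to $\{\pi \mid \E_\pi[\mathcal{U}_1] = u\}$. Since the causal entropy $H_\pi(\bm D \mid\mid \Pa_D)$ depends only on $\pi$ and $G$ (not on the utility variables at all), maximising it over these two identical constraint sets yields the same set of maximisers, i.e. $\Pi^{\me}_{\mathcal{U}_2, au+b} = \Pi^{\me}_{\mathcal{U}_1, u}$.

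From here I would take the union over attainable expected utilities. Because $u \mapsto au+b$ is a bijection $\att(\mathcal{U}_1) \to \att(\mathcal{U}_2)$, we get
\[
\Pi^{\me}_{\mathcal{U}_2} = \bigcup_{u' \in \att(\mathcal{U}_2)} \Pi^{\me}_{\mathcal{U}_2, u'} = \bigcup_{u \in \att(\mathcal{U}_1)} \Pi^{\me}_{\mathcal{U}_2, au+b} = \bigcup_{u \in \att(\mathcal{U}_1)} \Pi^{\me}_{\mathcal{U}_1, u} = \Pi^{\me}_{\mathcal{U}_1}.
\]
So the two CIDs have exactly the same maximum entropy policy set. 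Finally, the objective being maximised in \cref{eq:meg}, namely $\E_\pi\big[\sum_{D\in\bm D}(\log \pi^{\me}(D\mid\Pa_D) - \log\frac{1}{|\dm(D)|})\big]$, is a function of $\pi$ and $\pi^{\me}$ only and makes no reference to the utility function. Maximising the same objective over the same set $\Pi^{\me}_{\mathcal{U}_1} = \Pi^{\me}_{\mathcal{U}_2}$ gives the same value, hence $\meg_{\mathcal{U}_1}(\pi) = \meg_{\mathcal{U}_2}(\pi)$.

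The proof is essentially bookkeeping, so there is no deep obstacle; the one place to be slightly careful is the case $a < 0$, where ``optimising'' and ``pessimising'' swap — but this does not affect the argument, since we never single out the optimal utility value: $\Pi^{\me}_{\mathcal{U}}$ ranges over \emph{all} attainable expected utilities, and the bijection $u \mapsto au+b$ works equally well for negative $a$ (the requirement $a \neq 0$ is exactly what is needed for it to be a bijection). I should also note explicitly that $\att(\mathcal{U}_1)$ and $\att(\mathcal{U}_2)$ are nonempty and that the two CIDs share the same policy space $\Pi$ (immediate, since $M_2$ differs from $M_1$ only in the CPDs of utility nodes, leaving $\bm D$, $\Pa_D$ and $\dm(D)$ unchanged), so the sets over which we optimise are genuinely identical rather than merely isomorphic.
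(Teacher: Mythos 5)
Your proof is correct and follows essentially the same route as the paper's: both establish $\Pi^{\me}_{\mathcal{U}_1} = \Pi^{\me}_{\mathcal{U}_2}$ by noting that, because $a\neq 0$, the constraint $\E_\pi[\mathcal{U}_1]=u$ is equivalent to $\E_\pi[\mathcal{U}_2]=au+b$ while the entropy objective is utility-independent, and then observe that MEG depends only on this policy set. Your version merely spells out the bijection on attainable utilities and the unchanged policy space a bit more explicitly than the paper does.
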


Second, goal-directedness should be minimal when actions are chosen completely at random and maximal when uniquely optimal actions are chosen. 

\begin{restatable}[Bounds]{proposition}{boundsprop}
 \label{prop: bounds}
 Let $M$ be a CID with utility function $\mathcal{U}$. Then for any policy $\pi$ we have $0\leq \meg_{\mathcal{U}}(\pi)\leq \sum_{D\in\mathcal{D}}\log(|\dm(D)|)$, with equality in the lower bound if $\pi$ is the uniform policy, and equality in the upper bound if $\pi$ is the unique optimal (or anti-optimal) policy with respect to $\mathcal{U}$. 
\end{restatable}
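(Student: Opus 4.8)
The plan is to establish the two inequalities separately, with the lower bound being nearly immediate and the upper bound requiring a bit more care about the structure of $\Pi^{\me}_{\mathcal{U}}$. Throughout I will work with the per-decision summand $\log \pi^{\me}(D\mid\Pa_D) - \log\frac{1}{|\dm(D)|}$ and use that $\meg_{\mathcal{U}}(\pi)$ is a maximum over $\pi^{\me}\in\Pi^{\me}_{\mathcal{U}}$ of its $\pi$-expectation.

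\textbf{Lower bound.} First I would show $\meg_{\mathcal{U}}(\pi)\ge 0$ for every $\pi$. The natural witness is the uniform policy $\pi^{\text{unif}}$: I need to check $\pi^{\text{unif}}\in\Pi^{\me}_{\mathcal{U}}$, which holds because $\pi^{\text{unif}}$ attains the globally maximal causal entropy $\sum_{D}\log|\dm(D)|$, hence in particular it is a maximum-entropy policy among all policies with its own attained expected utility $\E_{\pi^{\text{unif}}}[\mathcal{U}]\in\att(\mathcal{U})$. Plugging $\pi^{\me}=\pi^{\text{unif}}$ into \eqref{eq:meg} makes every summand $\log\frac{1}{|\dm(D)|}-\log\frac{1}{|\dm(D)|}=0$, so the expectation is $0$ and the max defining $\meg$ is $\ge 0$. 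For the equality claim: if $\pi$ itself is uniform, then for \emph{any} $\pi^{\me}$ the $\pi$-expectation $\E_{\pi^{\text{unif}}}[\log\pi^{\me}(D\mid\Pa_D)] - \log\frac1{|\dm(D)|}$ equals $-\big(H(\text{unif}) + D_{\mathrm{KL}}(\text{unif}\,\|\,\pi^{\me})\big) + H(\text{unif}) = -D_{\mathrm{KL}}(\text{unif}\,\|\,\pi^{\me})\le 0$, summed over decisions and over the parent distribution; so the maximum over $\pi^{\me}$ is exactly $0$, attained at $\pi^{\me}=\pi^{\text{unif}}$.

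\textbf{Upper bound.} Each summand satisfies $\log\pi^{\me}(D\mid\Pa_D) - \log\frac1{|\dm(D)|}\le -\log\frac1{|\dm(D)|} = \log|\dm(D)|$ pointwise, since $\log\pi^{\me}(d\mid\pa_D)\le 0$. Taking expectations and summing gives $\meg_{\mathcal{U}}(\pi)\le\sum_{D\in\bm D}\log|\dm(D)|$ for every $\pi$ and every choice of $\pi^{\me}$, hence for the maximising one. For the equality claim, suppose $\pi$ is the \emph{unique} optimal (or anti-optimal) policy with respect to $\mathcal{U}$; uniqueness forces $\pi$ to be deterministic and the relevant extreme value $u^{\*}=\max\att(\mathcal{U})$ (resp.\ $\min$) to be attained only by $\pi$. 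Then the maximum-entropy policy constrained to $\E[\mathcal{U}]=u^{\*}$ has nothing to maximise over — the feasible set is the singleton $\{\pi\}$ — so $\pi\in\Pi^{\me}_{\mathcal{U}}$ and in fact $\pi=\pi^{\me}_{\mathcal{U},u^{\*}}$. Choosing this $\pi^{\me}=\pi$ in \eqref{eq:meg}, each summand is $\log\pi(D\mid\Pa_D) - \log\frac1{|\dm(D)|}$; since $\pi$ is deterministic, $\log\pi(d\mid\pa_D)=0$ on the support of $P_\pi$, so the expectation of each summand is $\log|\dm(D)|$ and the total is $\sum_D\log|\dm(D)|$.

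\textbf{Main obstacle.} The only delicate point is the equality direction of the upper bound: justifying that a unique optimal policy actually lies in $\Pi^{\me}_{\mathcal{U}}$. This needs the observation that $u^{\*}=\max\att(\mathcal{U})$ (or $\min$) belongs to $\att(\mathcal{U})$ and is attained by a \emph{unique} $\pi$, so the $\argmax$ in \cref{def: known meg} over $\{\pi\mid\E_\pi[\mathcal{U}]=u^{\*}\}$ degenerates to that single policy; and that a policy which is the unique maximiser of $\E_\pi[\mathcal{U}]$ must be deterministic (any randomisation at a decision with two actions of unequal continuation value would strictly change the value, and any randomisation between two equally-good actions would contradict uniqueness). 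Everything else is a one-line bound on $\log\pi^{\me}(d\mid\pa_D)$ or a short Gibbs/KL computation.
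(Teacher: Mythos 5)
Your proof is correct and follows essentially the same route as the paper's: the uniform policy as a witness in $\Pi^{\me}_{\mathcal{U}}$ for the lower bound, the pointwise bound $\log \pi^{\me}(D\mid \Pa_D)\le 0$ for the upper bound, and the observation that a unique optimal (or anti-optimal) policy must be deterministic and must itself lie in $\Pi^{\me}_{\mathcal{U}}$ (the feasible set at the extreme attainable utility being the singleton $\{\pi\}$) for the equality case. Your short KL/Gibbs argument showing that when $\pi$ is uniform \emph{no} element of $\Pi^{\me}_{\mathcal{U}}$ can give a positive value is in fact slightly more careful than the paper, which only exhibits the uniform witness for that equality claim.
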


Note that MEG has a natural interpretation as the amount of evidence provided for a goal-directed policy over a purely random policy. The larger a decision problem, the more opportunity there is to see this evidence, so the higher MEG can be. 

Third, a system can never be goal-directed towards a utility function it cannot affect. 

\begin{restatable}[No goal-directedness without causal influence]{proposition}{nogdprop}
Let $M = (G, P)$ be a CID with utility function $\mathcal{U}$ and decision variables $\bm{D}$ such that, $\Desc(\bm{D})\cap\Pa_{\bm{U}}=\emptyset$. 
Then $\meg_{\mathcal{U}}(\bm{D})=0$.
    \label{prop: no gd wo influence}
\end{restatable}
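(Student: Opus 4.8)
The plan is to show that the hypothesis $\Desc(\bm{D})\cap\Pa_{\bm{U}}=\emptyset$ forces the set of attainable expected utilities to be a single point, which collapses the maximum entropy policy set to (essentially) the uniform policy, for which the predictive accuracy in \cref{eq:meg} is zero by construction. Throughout I use the standard convention that $V\in\Desc(V)$.

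The first step is to show $\att(\mathcal{U})=\{u_0\}$ for some constant $u_0$. I claim no decision is an ancestor of any utility variable: a directed path from a decision $D$ to some $U\in\bm{U}$ would have, immediately before $U$, a node that is either $D$ itself or a proper descendant of $D$ -- in either case a member of $\Desc(\bm{D})\cap\Pa_{\bm{U}}$, a contradiction. Hence the set consisting of $\bm{U}$ together with all its ancestors contains only chance and utility variables, is ancestrally closed, and so its joint distribution -- and in particular the marginal of $\bm{U}$ -- is determined by the fixed CPDs of $P$ and does not depend on the policy. Since $\E_\pi[\mathcal{U}]$ depends on $\pi$ only through this marginal, it equals a constant $u_0$, so $\att(\mathcal{U})=\{u_0\}$.

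Next, by \cref{def: known meg} we have $\Pi^{\me}_{\mathcal{U}}=\Pi^{\me}_{\mathcal{U},u_0}=\argmax_{\pi\mid\E_\pi[\mathcal{U}]=u_0}H_\pi(\bm{D}\mid\mid\Pa_D)$; but the constraint $\E_\pi[\mathcal{U}]=u_0$ now holds for \emph{every} policy, so $\Pi^{\me}_{\mathcal{U}}$ is exactly the set of policies of maximal causal entropy. Since each decision rule $\pi_D(\cdot\mid\pa_D)$ contributes at most $\log|\dm(D)|$ to $H_\pi(\bm{D}\mid\mid\Pa_D)$, with equality iff it is uniform, a policy lies in $\Pi^{\me}_{\mathcal{U}}$ iff $\pi_D(\cdot\mid\pa_D)$ is uniform for every $D$ and every $\pa_D$ it reaches with positive probability; in particular $\pi^{\text{unif}}\in\Pi^{\me}_{\mathcal{U}}$. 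Plugging any $\pi^{\me}\in\Pi^{\me}_{\mathcal{U}}$ into \cref{eq:meg} and fixing an arbitrary policy $\pi$, I then want $\pi^{\me}_D(\cdot\mid\pa_D)$ to be uniform on every $\pa_D$ that $\pi$ reaches, so that each (expected) summand $\E_\pi[\log\pi^{\me}(D\mid\Pa_D)-\log\tfrac{1}{|\dm(D)|}]$ equals $-\log|\dm(D)|+\log|\dm(D)|=0$, giving $\meg_{\mathcal{U}}(\pi)=0$ (consistent with the lower bound of \cref{prop: bounds}).

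The main obstacle is the last claim, that $\pi$-reachable parent configurations are also $\pi^{\me}$-reachable -- needed because a maximal-causal-entropy policy is pinned down only on the configurations it actually visits and could behave non-uniformly elsewhere. I would establish it in two moves: (i) any $\pa_D$ with $P_\pi(\pa_D)>0$ has $P_{\pi^{\text{unif}}}(\pa_D)>0$, since a positive-probability witness assignment for $\pa_D$ under $\pi$ remains positive-probability once its policy factors are replaced by the strictly positive uniform ones; and (ii) any $\pa_D$ with $P_{\pi^{\text{unif}}}(\pa_D)>0$ has $P_{\pi^{\me}}(\pa_D)>0$, by an induction along a topological order of the decisions that uses the fact that $\pi^{\me}$ has full support on the configurations it reaches. (If one restricts to MDPs, or identifies policies inducing the same joint distribution, then $\Pi^{\me}_{\mathcal{U}}=\{\pi^{\text{unif}}\}$ outright and this bookkeeping is unnecessary.)
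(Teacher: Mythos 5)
Your proof is correct and takes essentially the same approach as the paper's: show that, since no decision is an ancestor of any utility variable, every policy attains the same expected utility, so the expected-utility constraint is vacuous and the maximum entropy policy set reduces to (essentially) the uniform policy, giving $\meg_{\mathcal{U}}=0$. Your additional bookkeeping about parent configurations that are unreachable under a maximum-entropy policy is a more careful handling of a point the paper glosses over (it simply asserts the set "contains only the uniform policy"), but it does not change the substance of the argument.
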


\paragraph{Comparison to MCE IRL}

Our method is closely related to MCE IRL \citep{ziebart2010paper} (see \citet{gleave2022primer} for a useful primer). In this subsection, we discuss the key similarities and differences. 

The MCE IRL method seeks
to find a utility function that explains the policy $\pi$. It starts by identifying a set of $n$ linear features $f_i$ and seeks a model policy that imitates $\pi$ as far as these features are concerned but otherwise is as random as possible. 
It thus applies the principle of maximum entropy with $n$ linear constraints. 
The form of the model policy involves a weighted sum of these features. 
In a single-decision example, it takes the form
\begin{equation}
\label{eq:mce}
\pi^{\text{MCE}}(d\mid s)
=\frac{\exp\left(\E\left[\sum_i w_i f_i\mid d, s\right]\right)}
{\sum_{d'}\exp\left(\E\left[w_i f_i\mid d', s\right]\right)}.    
\end{equation}
The weights $w_i$ are interpreted as a utility function over the features $f_i$. MCE IRL can, therefore, only return a linear utility function.

In contrast, our method seeks to measure the goal-directedness of $\pi$ with respect to an arbitrary utility function $\mathcal{U}$, linear or otherwise. Rather than constructing a single maximum entropy policy with $n$ linear constraints, we construct a class of maximum entropy policies, each with a different single constraint on the expected utility.

A naive alternative to defining the goal-directedness of $\pi$ with respect to $\mathcal{U}$ as the maximum predictive accuracy across $\mathcal{U}$'s maximum policy set, we could simply plug in our utility function $\mathcal{U}$ to $\pi^{\text{MCE}}$ from \cref{eq:mce}, and use that to measure predictive accuracy. If $\mathcal{U}$ is linear in the features $f_i$, we could substitute in the appropriate weights, but even if not, we could still replace $\sum_i w_i f_i$ with $\mathcal{U}$. Indeed, this is often done with nonlinear utility functions in \emph{deep} MCE IRL \citep{wulfmeier2015maximum}.

However, this would not have a formal justification, and we would run into a problem: scale non-invariance. Plugging in $2\cdot\mathcal{U}$ would result in a more sharply peaked $\pi^{\text{MCE}}$ than $\mathcal{U}$; in \cref{ex:mouse}, we would get that the mouse is more goal-directed towards $2\cdot\mathcal{U}$ than $\mathcal{U}$, with a predictive accuracy (measured by negative cross-entropy) of -0.018 vs -0.13. In contrast, constructing separate maximum entropy policies for each expected utility automatically handles this issue. The policy in $\Pi^{\me}_{2\cdot\mathcal{U}}$ which maximises predictive accuracy for $\pi$ has an inversely scaled rationality parameter $\beta' = \frac{\beta}{2}$ compared to the maximally predictive policy in $\Pi^{\me}_{2\cdot\mathcal{U}}$. In other words, they are the same policy, and we get that $\meg_{\mathcal{U}}(\pi) = \meg_{2\cdot \mathcal{U}}(\pi) = 0.19$ (cf.\ \cref{prop: invariance}).

\section{Measuring goal-directedness without a known utility function}
\label{sec:uu meg}

In many cases where we want to apply MEG, we may not know exactly what utility function the system could be optimising.
For example, we might suspect that a content recommender is trying to influence a user's preferences, but may not have a clear hypothesis as to in what way.
Therefore, in this section, we extend our definitions for measuring goal-directedness to the case where the utility function is unknown.

We first extend CIDs to include multiple possible utility functions.

\begin{definition}
    A \emph{parametric-utility CID} (CID) $M^{\Theta}$ is a set of CIDs $\{M^{\theta}\mid \theta\in\Theta\}$ which differ only in the CPDs of their utility variables.
\end{definition}

In effect, a parametric CID is a CID with a parametric class of utility functions $\mathcal{U}^{\Theta}$. 

The maximum entropy policy set from \cref{def: known meg} is extended accordingly to include maximum entropy policies \emph{for each utility function} and each attainable expected utility with respect to it.

\begin{definition}[Maximum entropy policy set, unknown utility function]
    Let $M^{\Theta}=(G,P)$ be a parametric-utility CID with decision variables $\bm{D}$ and utility function $\mathcal{U}^{\Theta}$. 
    The \emph{maximum entropy policy set for $\mathcal{U}^{\Theta}$} is the set of maximum entropy policies for any attainable expected utility for any utility function in the class: 
    $\Pi^{\me}_{\mathcal{U}^{\Theta}} \coloneqq
    \bigcup_{\theta\in\Theta, u \in \att(\mathcal{U}^{\theta})} 
    \Pi^{\me}_{{\mathcal{U}^\theta},u}.$ 
\end{definition}

\begin{definition}[MEG, unknown utility function]
Let $M^{\Theta}=(G,P)$ be a parametric-utility CID with decision variables $\bm{D}$ and utility function $\mathcal{U}^{\Theta}$. 
The \emph{maximum entropy goal-directedness} of a policy $\pi$ with respect to $\mathcal{U}^{\Theta}$ is
$\meg_{\mathcal{U}^{\Theta}}(\pi)
=\max_{ \mathcal{U}\in\mathcal{U}^\Theta}\meg_{\mathcal{U}}(\pi)
$. 
\end{definition}

\begin{definition}[MEG, target variables]
\label{def: meg in cbns}
Let $M = (G,P)$ be a CBN with variables $\bm V$. Let $\bm{D}\subseteq \bm V$ be a hypothesised set of decision variables and $\bm{T}\subseteq{V}$ be a hypothesised set of \emph{target} variables. 
The \emph{maximum entropy goal-directedness} of $\bm{D}$ with respect to $\bm{T}$, $\meg_{\bm{T}}(\bm{D})$, is the goal-directedness of $\pi=P(\bm{D}\mid\Pa_D)$ in the parametric CID with decisions $\bm{D}$ and utility functions $\mathcal{U}:\dm(\bm{T})\to\mathbb{R}$ (i.e., the set of all utility functions over $T$).
\end{definition}

For example, if we only suspected that the mouse in \cref{ex:mouse} was optimising some function of the cheese $T$, but didn't know which one, we could apply \cref{def: meg in cbns} to consider the goal-directedness towards $T$ under any utility function defined on $T$.
Thanks to translation and scale invariance (\cref{prop: invariance}), there are effectively only three utility functions to consider:
those that provide higher utility to cheese than not cheese, 
those that do the opposite, and those that are indifferent.

Note that $\bm{T}$ has to include some descendants of $\bm{D}$, in order to enable positive MEG (\cref{prop: no gd wo influence}).
However, it is not necessary that $\bm{T}$ consists of \emph{only} descendants of $\bm{D}$ (i.e., $\bm{T}$ need not be a subset of $\Desc(\bm{D})$).
For example, goal-conditional agents take an instruction as part of their input $\Pa_D$.
The goal-directedness of such agents can only be fully appreciated by including the instruction in $\bm{T}$.

\paragraph{Pseudo-terminal goals.}
\Cref{def: meg in cbns} enable us to state a result about a special kind of instrumental goal. It is well known that an agent that optimises some variable has an instrumental incentive to control any variables which mediate between the two \citep{everitt2021agent}. However, since the agent might want the mediators to take different values in different circumstances, it need not appear goal-directed with respect to the mediators. \Cref{prop: pseudo} shows that in the special case where the mediators \emph{d-separate} the decision from the downstream variable, the decision appears at least as goal-directed with respect to the mediators as with respect to the target.

\begin{restatable}[Pseudo-terminal goals]{theorem}{proppseudo}
Let $M = ((\bm{V}, \bm{E}), P)$ be a CBN. Let 
$\bm{D},\bm{T},\bm{S}\subseteq \bm{V}$ such that 
    $\bm{D}\perp\bm{T}\mid \bm{S}$. Then 
$\meg_{\bm{T}}(\bm{D})\leq\meg_{\bm{S}}(\bm{D})$.
    \label{prop: pseudo}
\end{restatable}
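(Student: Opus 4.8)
The plan is to reduce the statement to a containment between maximum entropy policy sets. Unwinding \cref{def: meg in cbns} together with the definitions of MEG for a parametric utility class, $\meg_{\bm{T}}(\bm{D})$ equals $\max_{\pi^{\me}\in\Pi^{\me}_{\bm{T}}}\E_\pi[\sum_{D\in\bm{D}}(\log\pi^{\me}(D\mid\Pa_D)-\log\frac{1}{|\dm(D)|})]$, where $\pi=P(\bm{D}\mid\Pa_D)$ and $\Pi^{\me}_{\bm{T}}\coloneqq\bigcup_{\mathcal{U}\colon\dm(\bm{T})\to\mathbb{R}}\Pi^{\me}_{\mathcal{U}}$ is the union, over all utility functions on $\bm{T}$ and all their attainable expected utilities, of the corresponding maximum entropy policy sets; $\meg_{\bm{S}}(\bm{D})$ has the identical form with $\bm{S}$ in place of $\bm{T}$, the same $\pi$, and the same inner objective. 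Since that inner objective does not depend on which utility function the policy in the union came from, maximising it over a larger set can only increase the value, so it suffices to prove $\Pi^{\me}_{\bm{T}}\subseteq\Pi^{\me}_{\bm{S}}$.

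To establish this containment I would show that every utility function $\mathcal{U}\colon\dm(\bm{T})\to\mathbb{R}$ admits a surrogate $\mathcal{V}\colon\dm(\bm{S})\to\mathbb{R}$ with $\Pi^{\me}_{\mathcal{U}}=\Pi^{\me}_{\mathcal{V}}$; taking the union over $\mathcal{U}$ then yields $\Pi^{\me}_{\bm{T}}\subseteq\Pi^{\me}_{\bm{S}}$. The natural candidate is the conditional expectation $\mathcal{V}(\bm{s})\coloneqq\E[\mathcal{U}(\bm{T})\mid\bm{S}=\bm{s}]$. The crux — and the step I expect to be the main obstacle — is that, because $\bm{S}$ d-separates $\bm{D}$ from $\bm{T}$, this quantity is \emph{policy-independent}: for every policy $\pi'$ (not only the $P$-induced one) and every $\bm{s}$ of positive probability, $P_{\pi'}(\bm{T}=\bm{t}\mid\bm{S}=\bm{s})$ takes the same value. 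Granting this, the tower rule gives $\E_{\pi'}[\mathcal{U}(\bm{T})]=\E_{\pi'}[\mathcal{V}(\bm{S})]$ for \emph{every} $\pi'$, so for each $u$ the feasibility sets coincide, $\{\pi'\mid\E_{\pi'}[\mathcal{U}(\bm{T})]=u\}=\{\pi'\mid\E_{\pi'}[\mathcal{V}(\bm{S})]=u\}$, and $\att(\mathcal{U})=\att(\mathcal{V})$. As the maximum-entropy objective $H_{\pi'}(\bm{D}\mid\mid\Pa_D)$ is unchanged, $\Pi^{\me}_{\mathcal{U},u}=\Pi^{\me}_{\mathcal{V},u}$ for all $u$, hence $\Pi^{\me}_{\mathcal{U}}=\Pi^{\me}_{\mathcal{V}}\subseteq\Pi^{\me}_{\bm{S}}$, as required. (The only $\bm{s}$ not covered by the policy-independence claim are those of probability zero under every policy; setting $\mathcal{V}(\bm{s})$ arbitrarily there does no harm, since such $\bm{s}$ never contribute to an expectation.)

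It remains to prove the policy-independence of $P_{\pi'}(\bm{T}\mid\bm{S})$, and for this I would use the standard device of adjoining a fresh root ``policy node'' $\Pi$ to the DAG with an edge $\Pi\to D$ for each $D\in\bm{D}$, so that conditioning on $\Pi=\pi'$ recovers $P_{\pi'}$; by soundness of d-separation it then suffices to show that $\bm{S}$ d-separates $\{\Pi\}$ from $\bm{T}$ in the augmented DAG. Any path from $\Pi$ to a node $T^{*}\in\bm{T}$ begins $\Pi\to D_{0}\to\cdots$ with $D_{0}\in\bm{D}$; deleting this first edge leaves a path from $D_{0}\in\bm{D}$ to $T^{*}\in\bm{T}$ in the original graph, which, by hypothesis — reading $\bm{D}\perp\bm{T}\mid\bm{S}$ as d-separation in $(\bm{V},\bm{E})$, consistent with the informal statement — is blocked by $\bm{S}$ at some \emph{internal} node $m$. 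Restoring the edge $\Pi\to D_{0}$ changes only the collider status of $D_{0}$, which, being an endpoint of the shorter path, is distinct from $m$; and adjoining a root creates no new descendants of $m$. Hence $m$ still blocks the extended path, giving the required d-separation. (One should note that if $\bm{D}$, $\bm{T}$, $\bm{S}$ are not pairwise disjoint the only new subtlety is the status of $D_{0}$ itself; d-separation forces any $\bm{D}$–$\bm{T}$ overlap into $\bm{S}$, the offending paths collapse, and one reduces to the disjoint case, so these instances cause no real trouble.)
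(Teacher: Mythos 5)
Your proposal is correct and takes essentially the same route as the paper: both reduce the claim to the containment of the maximum entropy policy set for $\bm{T}$ inside that for $\bm{S}$, established via the surrogate utility $\mathcal{U}^{\bm{S}}(\bm{s})=\sum_{\bm{t}}P(\bm{t}\mid\bm{s})\,\mathcal{U}^{\bm{T}}(\bm{t})$, whose expectation agrees with that of $\mathcal{U}^{\bm{T}}$ under \emph{every} policy because $\bm{D}\perp\bm{T}\mid\bm{S}$ makes $P(\bm{T}\mid\bm{S})$ policy-independent. The only difference is that you explicitly justify that policy-independence with the policy-node/d-separation-soundness argument, a step the paper simply asserts.
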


For example, in \cref{fig:mouse2}, the agent must be at least as goal-directed towards $S_3$ as it is towards $U_3$, since $S_3$ blocks all paths (i.e.\ d-separates) from $\{D_1,D_2\}$ to $U_3$. 

Intuitively, this comes about because, in such cases, a rational agent wants the mediators to take the same values regardless of circumstances, making the instrumental control incentive indistinguishable from a terminal goal.
This means we do not have to look arbitrarily far into the future to find evidence of goal-directedness. 
An agent that is goal-directed with respect to next week must also be goal-directed with respect to tomorrow.

\section{Computing MEG in Markov Decision Processes}
\label{sec: algorithms}

In this section, we give algorithms for computing MEG in MDPs. 
First, we define what an MDP looks like as a CID.
We then establish that soft value iteration can be used to construct our maximum entropy policies, and give algorithms for computing MEG when the utility function is known or unknown.

Note that in order to run these algorithms, we do not need an explicit causal model, and so we do not have to worry about hidden confounders. We do, however, need black-box access to the environment dynamics, i.e., the ability to run different policies in the environment and measure whatever variables we are considering utility functions over.

\begin{definition} 
    A \emph{Markov Decision Process} (MDP) is a CID with variables $\{S_t, D_t, U_t\}_{t=1}^n$, decisions $\bm{D} = \{D_t\}_{t=1}^n$ and utilities $\bm{U} = \{U_t\}_{t=1}^n$, such that for $t$ between 1 and $n$,
    $\Pa_{D_t} = \{S_t\}$,
    $\Pa_{U_t} = \{S_t\}$,
    while $\Pa_{S_t} = \{S_{t-1}, D_{t-1}\}$ for $t>1$, and $\Pa_{S_1} = \emptyset$.
\end{definition}

\paragraph{Constructing Maximum Entropy Policies}
\label{sec: constructing}
In MDPs, Ziebart's soft value iteration algorithm can be used to construct maximum entropy policies satisfying a set of linear constraints. 
We apply it to construct maximum entropy policies satisfying expected utility constraints.

\begin{definition}[Soft Q-Function]

Let $M=(G, P)$ be an MDP.
 Let $\beta\in\mathbb{R}\setminus\{0\}$.
For each $D_t\in\bm{D}$ we define the \emph{soft Q-function} $Q_{\beta,n}^{\text{soft}}: \dm(D_t)\times\dm (\Pa_{D_t})\to\mathbb{R}$ via the recursion:
\begin{align*}
    Q_{\beta,t}^{\textnormal{soft}}(d_t\mid \pa_t) 
&= \mathbb{E}\left[U_t + \frac{1}{\beta}\logsumexp (\beta \cdot Q_{\beta, t+1}^{\textnormal{soft}}(\cdot\mid\Pa_{D_{t+1}})) \middle| d_t, \pa_{t+1} \right] &\text{for 
} t<n,\\
Q_{\beta,n}^{\textnormal{soft}}(d_n\mid \pa_n)
&=\mathbb{E}\left[U_n\mid d_n, \pa_n\right],
\end{align*}
where 
$\logsumexp \beta (Q_{\beta, t+1}^{\textnormal{soft}}(\cdot\mid\Pa_{D_{t+1}})) = \log\sum_{d_{t+1}\in\dm(D_{t+1})}\exp (\beta Q_{\beta, t+1}^{\text{soft}}(d_{t+1}\mid \Pa_{D_{t+1}}))$.

\label{def: soft Q}
\end{definition}

Using the soft Q-function, we 
show that there is a unique $\pi\in\Pi^{\me}_{\mathcal{U},u}$ for each $\mathcal{U}$ and $u$ in MDPs.

\begin{restatable}[Maximum entropy policy in MDPs]{theorem}{maxentpolicytheorem} 
Let \( M=(G, P) \) be an MDP with utility function \( \mathcal{U} \), and let \( u\in\att(\mathcal{U}) \) be an attainable expected utility. Then there exists a unique maximum entropy policy \( \pi^{\me}_{u}\in\Pi^{\me}_{\mathcal{U},u} \), and it has the form 
\begin{equation} \label{eq: policy form} 
\pi^{\me}_{u,t}(d_t \mid\pa_t) 
=\pi^{\textnormal{soft}}_{\beta,t}(d_t \mid\pa_t) \coloneqq 
\begin{cases} \frac{ \exp(\beta\cdot Q^{\textnormal{soft}}_{\beta,t}(d_t\mid \pa_t)) } { \sum_{d'\in\dm(D_t)} \exp(\beta\cdot Q_{\beta,t}^{\textnormal{soft}}(d_t'\mid \pa_{t})) }, & \text{if } \beta \neq 0 
\\ \pi^{\textnormal{unif}}(d_t \mid \pa_t), & \text{if } \beta = 0. \end{cases} 
\end{equation} 
where \( \beta=\argmax_{\beta'\in\mathbb{R}\cup\{\infty,-\infty\}} \sum_t \E_\pi\left[\log(\pi^{\textnormal{soft}}_{\beta',t}(d_t\mid\pa_t))\right] \). \label{thm:maxent policy mdp} \end{restatable}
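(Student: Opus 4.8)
The plan is to deduce the theorem from \citet{ziebart2010paper}'s maximum causal entropy characterisation together with an intermediate-value/Lagrangian argument for existence and the functional form, and to obtain uniqueness from strict concavity of causal entropy. \textbf{Step 1 (the soft policy maximises a penalised objective).} I would first establish, by backward induction on $t$ using the soft Bellman recursion of \cref{def: soft Q}, that for each fixed $\beta\in\mathbb{R}$ the policy $\pi^{\textnormal{soft}}_{\beta}$ is the \emph{unique} maximiser over all policies of $\pi\mapsto H_\pi(\bm D\mid\mid\Pa_D)+\beta\,\E_\pi[\mathcal{U}]$ (write $H_\pi$ for the causal entropy from here on). The inductive step is the elementary fact that, for a single decision with payoffs $a_d=Q^{\textnormal{soft}}_{\beta,t}(d\mid\pa_t)$, the distribution maximising $-\sum_d q_d\log q_d+\beta\sum_d q_d a_d$ is the Boltzmann distribution $q_d\propto e^{\beta a_d}$, uniquely, by strict concavity of Shannon entropy; the soft Bellman equation is precisely what makes these local optimisations glue into a global one. (This is Ziebart's theorem specialised to the single ``feature'' $\mathcal{U}$, so it could alternatively just be cited; the $\beta=0$ case is the statement that $\pi^{\textnormal{unif}}$ uniquely maximises causal entropy.)

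\textbf{Step 2 (hitting the target expected utility) and Step 3 (penalised $\Rightarrow$ constrained).} Next I would show the map $g:\beta\mapsto\E_{\pi^{\textnormal{soft}}_{\beta}}[\mathcal{U}]$, extended to $\beta\in\mathbb{R}\cup\{\pm\infty\}$ by the limiting policies $\pi^{\textnormal{soft}}_{\pm\infty}$ (uniform over the $\argmax$, resp.\ $\argmin$, of the hard $Q$-function at each reachable state), is continuous on the compactified line and has image exactly $\att(\mathcal{U})$: the image is connected, $\att(\mathcal{U})$ is itself a closed bounded interval (continuous image of the compact connected policy set), and its endpoints $\max\att(\mathcal{U})$ and $\min\att(\mathcal{U})$ are realised by $\pi^{\textnormal{soft}}_{\pm\infty}$ via standard dynamic programming. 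Hence there is $\beta(u)\in\mathbb{R}\cup\{\pm\infty\}$ with $\E_{\pi^{\textnormal{soft}}_{\beta(u)}}[\mathcal{U}]=u$. For finite $\beta(u)$: any $\pi'$ with $\E_{\pi'}[\mathcal{U}]=u$ satisfies $H_{\pi'}+\beta u=H_{\pi'}+\beta\E_{\pi'}[\mathcal{U}]\le H_{\pi^{\textnormal{soft}}_{\beta}}+\beta\E_{\pi^{\textnormal{soft}}_{\beta}}[\mathcal{U}]=H_{\pi^{\textnormal{soft}}_{\beta}}+\beta u$ by Step 1, so $H_{\pi'}\le H_{\pi^{\textnormal{soft}}_{\beta}}$ with equality iff $\pi'=\pi^{\textnormal{soft}}_{\beta}$; since $\pi^{\textnormal{soft}}_{\beta}$ lies in the constraint set, it is the unique element of $\Pi^{\me}_{\mathcal{U},u}$. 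For $\beta(u)=\pm\infty$ (i.e.\ $u$ an endpoint of $\att(\mathcal{U})$) the constraint set is the set of utility-optimal (resp.\ anti-optimal) policies; here I would use the lexicographic characterisation of $\pi^{\textnormal{soft}}_{\pm\infty}$ (it first maximises $\E_\pi[\mathcal{U}]$ and then, among those, maximises $H_\pi$) together with strict concavity of $H_\pi$ over the convex set of achievable state--action occupancy measures to conclude it is again the unique maximiser. This yields existence, uniqueness, and the claimed functional form simultaneously.

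\textbf{Step 4 (identifying $\beta$ with the argmax).} Finally I would check that $\beta(u)$ is an $\argmax$ of $\beta'\mapsto\sum_t\E_{\pi^{\me}_u}[\log\pi^{\textnormal{soft}}_{\beta',t}(D_t\mid\Pa_{D_t})]$: this quantity is minus the causal cross-entropy from $\pi^{\me}_u$ to $\pi^{\textnormal{soft}}_{\beta'}$, so applying Gibbs' inequality at each reachable parent configuration and summing gives that it is at most $-H_{\pi^{\me}_u}$, with equality iff $\pi^{\textnormal{soft}}_{\beta'}$ agrees with $\pi^{\me}_u=\pi^{\textnormal{soft}}_{\beta(u)}$ on every reachable $\pa_{D_t}$ — which $\beta'=\beta(u)$ achieves by Step 3. (If $\mathcal{U}$ is constant on $\att(\mathcal{U})$, or more generally if several $\beta'$ induce the same policy on reachable states, the $\argmax$ is a set and any element may be taken; this is the only source of non-uniqueness in the characterisation.)

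\textbf{Main obstacle.} The delicate part is the boundary case $\beta=\pm\infty$: one must make the limit $\pi^{\textnormal{soft}}_{\pm\infty}$ rigorous (the soft $Q$-values converge to the hard $Q$-values and the Boltzmann weights concentrate on the $\argmax$ as $\beta\to\pm\infty$), verify that this limiting policy is genuinely \emph{the} maximum-causal-entropy optimal/anti-optimal policy rather than merely one of them, and accommodate that it may assign probability zero to some actions (so $\log\pi^{\textnormal{soft}}_{\pm\infty}$ can be $-\infty$ — which is exactly why the $\argmax$ in Step 4 ranges over the extended reals). Everything else is either citable from Ziebart or routine convex analysis.
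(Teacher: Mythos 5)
Your route is sound and rests on the same maximum-causal-entropy machinery as the paper, but it is organised differently and is more self-contained. The paper handles interior points of $\att(\mathcal{U})$ by citing Ziebart's constrained-entropy result with the single feature $\mathcal{U}$ and then uses the rescaling identity $\pi_{\alpha\beta,\mathcal{U}}=\pi_{\beta,\alpha\mathcal{U}}$ (\cref{policylemma}) to move the Lagrange multiplier from the utility into the rationality parameter, treating the endpoints $u_{\min},u_{\max}$ separately as $\beta\to\pm\infty$ limits. You instead prove the penalised statement directly (your Step 1, which indeed is just Ziebart specialised and could be cited), then get the constrained statement by the standard Lagrangian sufficiency argument, and use the compactified intermediate-value argument to cover all of $\att(\mathcal{U})$ at once; this buys you a unified treatment of interior and boundary and removes the need for the rescaling lemma, at the cost of having to be careful about the $\beta=\pm\infty$ limits --- which you correctly flag as the delicate part. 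One caution there: your parenthetical description of $\pi^{\textnormal{soft}}_{\pm\infty}$ as ``uniform over the argmax of the hard $Q$-function'' is not quite right in multi-step MDPs (ties among hard-optimal actions are broken by the $O(1/\beta)$ continuation-entropy terms, so the limit weights optimal actions by the entropy of their optimal continuations); the paper's own proof makes the same loose claim, and your lexicographic characterisation plus concavity over occupancy measures is the correct way to patch it.

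The one substantive shortfall is Step 4. The theorem's characterisation $\beta=\argmax_{\beta'}\sum_t\E_\pi[\log\pi^{\textnormal{soft}}_{\beta',t}(d_t\mid\pa_t)]$ is meant to hold with the expectation taken under an arbitrary (in practice, the observed) policy $\pi$ attaining $u$ --- that is how the theorem is used in \cref{alg:ku meg} and in the worked example. Your Gibbs-inequality argument only establishes the argmax property when the expectation is taken under $\pi^{\me}_u$ itself, since Gibbs pins the maximiser only when the reference distribution is itself one of the soft policies. For a general $\pi$ with $\E_\pi[\mathcal{U}]=u$ you need the dual/concavity argument the paper leans on (via Ziebart): $\beta'\mapsto\sum_t\E_\pi[\log\pi^{\textnormal{soft}}_{\beta',t}]$ is concave with derivative $\E_\pi[\mathcal{U}]-\E_{\pi^{\textnormal{soft}}_{\beta'}}[\mathcal{U}]$, which vanishes exactly at the $\beta(u)$ produced by your Steps 2--3 (with the extended-real argmax covering the endpoint case). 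This is an easy fix, but as written your Step 4 proves a strictly weaker statement than the one the theorem asserts.
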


\label{subsec: ku alg}

We refer to a policy of the form of $\pi_\beta$ as a soft-optimal policy with a rationality parameter of $\beta$.

\paragraph{Known Utility Function}
To apply \cref{def: known meg} to measure the goal-directedness of a policy $\pi$ in a CID $M$ with respect to a utility function $\mathcal{U}$, we need to find the maximum entropy policy in  $\Pi^{\me}_{\mathcal{U}}$ which best predicts $\pi$. 
We can use \cref{thm:maxent policy mdp} to derive an algorithm that finds $\pi^{\me}_{u}$ for any $u\in\att(\mathcal{U})$.

Fortunately, we do not need to consider each policy in $\Pi^{\me}_{\mathcal{U},u}$ individually. We know the form of $\pi_u^{\me}$, and only the real-valued rationality parameter $\beta$ varies depending on $u$.

The gradient of the predictive accuracy with respect to $\beta$ is
$$
\nabla_\beta \E_\pi\left[\sum_{D\in\bm{D}}\left(\log \pi^{\text{soft}}_\beta(D\mid \bm{Pa_D})-\log\frac{1}{\mid \dm(D)\mid}\right)\right] = \E_\pi\left[\mathcal{U}\right] - \E_{\pi^{\text{soft}}_\beta}\left[\mathcal{U}\right]
.$$ 

The predictive accuracy is a concave function of $\beta$, 
so we can apply gradient ascent to find the global maximum in $\beta$, which is the same as finding the maximum in $u$. 

$\meg_{\mathcal{U}}(\pi)$ can therefore be found by alternating between applying the soft value iteration of \cref{def: soft Q} to find $\pi^{\me}_\beta$, computing  $\E_\pi\left[\mathcal{U}\right] - \E_{\pi^{\me}_\beta}\left[\mathcal{U}\right]$, and taking a gradient step. See \cref{alg:ku meg}.

\renewcommand{\algorithmicrequire}{\textbf{Input:}}
\renewcommand{\algorithmicensure}{\textbf{Output:}}

\begin{algorithm}
\begin{algorithmic}[1]

\Require MDP $M$, policy $\pi$
\Ensure $\meg_{\mathcal{U}}(\pi)$
\State {initialise rationality parameter $\beta$, set learning rate $\alpha$.}
\Repeat{}
  \State{Apply soft value iteration to find $Q^{\text{soft}}_\beta$} \Comment{\cref{def: soft Q}}
  \State $\pi^{\text{soft}}_\beta \gets \text{softmax}(\beta\cdot Q^{\text{soft}}_\beta)$
  \State{$g\gets \left(\E_\pi\left[\mathcal{U}\right] - \E_{\pi^{\text{soft}}_\beta}\left[\mathcal{U}\right]\right)$}
  \State{$\beta \gets \beta + \alpha \cdot g$}
\Until{Convergence}
\State \Return 
$ \E_\pi\left[\sum_{D\in\bm{D}}\left(\log \pi^{\text{soft}}_\beta(D\mid \Pa_D)-\log\frac{1}{\mid \dm(D)\mid}\right)\right]$
\end{algorithmic}
\caption{Known-utility MEG in MDPs}\label{alg:ku meg}
\end{algorithm}

If the $\beta$ that maximises predictive accuracy is $\infty$ or $-\infty$, which can happen if $\pi$ is optimal or anti-optimal with respect to $\mathcal{U}$, then the algorithm can never reach the (nonetheless finite) value of $\meg_{\mathcal{U}}(\pi)$, but will still converge in the limit. 

\paragraph{Unknown-utility algorithm}
\label{subsec: uu alg}

To find unknown-utility MEG, we consider soft-optimal policies for various utility functions $\mathcal{U}$ and maximise predictive accuracy with respect to $\mathcal{U}$ as well as $\beta$. Let $\mathcal{U}^\Theta$ be a differentiable class of functions, such as a neural network, and write $\pi^{\text{soft}}_{\theta,\beta}$ for a soft-optimal policy with respect to $\mathcal{U}\in\mathcal{U}^\Theta$, and $Q^{\text{soft}}_{\theta,\beta}$ for the corresponding soft Q-function. We can take the derivative of the predictive accuracy with respect to $\theta$ and get $\E_\pi\left[\nabla_\theta\mathcal{U}\right] - \E_{\pi^{\me}_\beta}\left[\nabla_\theta\mathcal{U}\right]$.
\Cref{alg:uu meg} extends \cref{alg:ku meg} to this case.

\begin{algorithm}
\begin{algorithmic}[1]
\Require Parametric MDP $M_{\Theta}$ over differentiable class of utility functions, policy $\pi$ 
\Ensure $\meg_{\mathcal{U}_{\Theta}}(\pi)$
\State{Initialise utility parameter $\theta$, rationality parameter $\beta$, set learning rate $\alpha$.}
\Repeat{}
  \State{Apply soft value iteration to find $Q^{\text{soft}}_{\theta,\beta}$} \Comment{\cref{def: soft Q}}
  \State $\pi^{\text{soft}}_{\theta,\beta} \gets \text{softmax}(\beta\cdot Q^{\text{soft}}_{\theta,\beta})$
  \State{$g_\beta\gets \left(\E_\pi\left[\mathcal{U}^\theta\right] - \E_{\pi^{\text{soft}}_\beta}\left[\mathcal{U}^\theta\right]\right)$}
  \State{$g_\theta\gets \left(\E_\pi\left[\nabla_\theta\mathcal{U}^\theta\right] - \E_{\pi^{\text{soft}}_\beta}\left[\nabla_\theta\mathcal{U}^\theta\right]\right)$}
  \State{$\beta \gets \beta + \alpha \cdot g_\beta$}
  \State{$\theta \gets \beta + \alpha \cdot g_\theta$}
\Until{Stopping condition}
\State \Return 
$\E_\pi\left[\sum_{D\in\bm{D}}\left(\log \pi^{\text{soft}}_{\theta,\beta}(D\mid \Pa_D)-\log\frac{1}{\mid \dm(D)\mid}\right)\right]$
\end{algorithmic}
\caption{Unknown-utility MEG in MDPs}\label{alg:uu meg}
\end{algorithm}

An important caveat is that if $\mathcal{U}^\theta$ is a non-convex function of $\theta$ (e.g. a neural network), \cref{alg:uu meg} need not converge to a global maximum. In general, the algorithm provides a \emph{lower bound} for $\meg_{\mathcal{U}_\theta}(\pi)$, and hence for $\meg_{\bm{T}}(\pi)$ where $\bm{T}=\Pa_{\mathcal{U}^{\Theta}}$. In practice, we may want to estimate the soft Q-function and expected utilities with Monte Carlo or variational methods, in which case the algorithm provides an \emph{approximate} lower bound on goal-directedness.

\section{Experimental Evaluation}
\label{sec:exps}

We carried out two experiments\footnote{Code available at \url{https://github.com/mattmacdermott1/measuring-goal-directedness}} to measure known-utility MEG with respect to the environment reward function and unknown-utility MEG with respect to a hypothesis class of utility functions. We used an MLP with a single hidden layer of size 256 to define a utility function over states. 

Our experiments measured MEG for various policies in the CliffWorld environment from the seals suite \citep{seals}. Cliffworld (\cref{fig: cliffworld}) is a 4x10 gridworld MDP in which the agent starts in the top left corner and aims to reach the top right while avoiding the cliff along the top row. With probability $0.3$, wind causes the agent to move upwards by one more square than intended. The environment reward function gives $+10$ when the agent is in the (yellow) goal square, $-10$ for the (dark blue) cliff squares, and $-1$ elsewhere. The dotted yellow line indicates a length 3 goal region.

\paragraph{MEG vs Optimality of policy}
In our first experiment, we measured the goal-directedness of policies of varying degrees of optimality by considering $\varepsilon$-greedy policies for $\varepsilon$ in the range $0.1$ to $0.9$. 
\cref{fig: meg vs epsilon} shows known- and unknown utility MEG for each policy.%
\footnote{Known-utility MEG is deterministic. Unknown-utility MEG depends on the random initialisation of the neural network, so we show the mean of several runs. Full details are given in \cref{appendix: experimental details}. } 
Predictably, the goal-directedness with respect to the environment reward decreased toward $0$ as the policy became less optimal. So did unknown-utility MEG -- since, as 
$\varepsilon$ increases, the policy becomes increasingly uniform, it does not appear goal-directed with respect to \emph{any} utility function over states.

\paragraph{MEG vs Task difficulty} In our second experiment, we measured the goal-directedness of optimal policies for reward functions of varying difficulty. We extended the goal region of Cliffworld to run for either 1, 2, 3 or 4 squares along the top row and back column and considered optimal policies for each reward function. \Cref{fig: cliffworld expanded goal} shows Cliffworld with a goal region of length 3. \cref{fig: meg vs epsilon} shows the results. Goal-directedness with respect to the true reward function decreased as the task became easier to complete. A way to interpret this is that as the number of policies which do well on a reward function increases, doing well on that reward function provides less and less evidence for deliberate optimisation. In contrast, unknown-utility MEG stays high even as the environment reward becomes easier to satisfy. This is because the optimal policy proceeds towards the \emph{nearest} goal-squares and, hence, it appears strongly goal-directed with respect to a utility function which gives high reward to only those squares. Since this narrower utility function is more difficult to do well on than the environment reward function, doing well on it provides more evidence for goal-directedness. In \cref{appendix: experimental details}, we visualise the policies in question to make this more explicit. We also give tables of results for both experiments.

\begin{figure}
\centering
\begin{subfigure}[b]{0.3\textwidth}
\centering
\includegraphics[width=\textwidth]{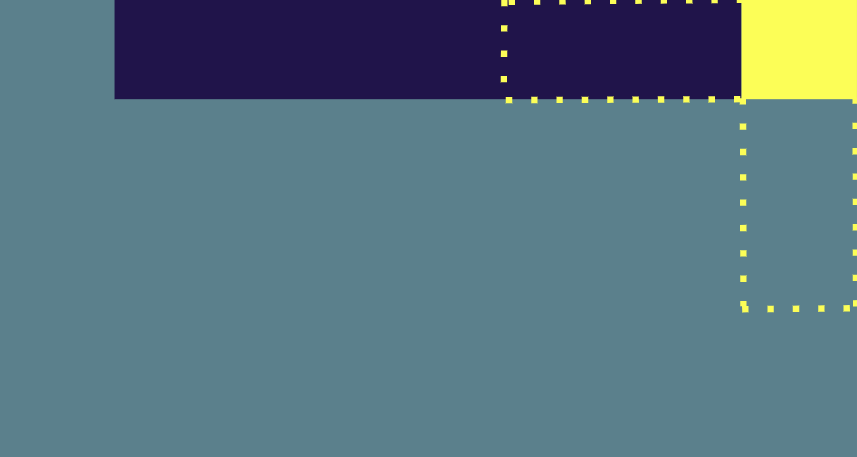}
\caption{CliffWorld}
\label{fig: cliffworld}
\end{subfigure}
\begin{subfigure}[b]{0.33\textwidth}
\centering
   \resizebox{1\linewidth}{!}{

\begin{tikzpicture}
    \begin{axis}[
        ylabel style={at={(axis description cs:0.08,.5)},anchor=south},
        xlabel=$\varepsilon$,
        ylabel=MEG of $\varepsilon$-greedy policy,
        xmin=0.05, xmax=.95,
        ymin=-2, ymax=30,
        ytick pos=left,
        xtick pos=bottom,
        xtick={0.1,0.2,0.3,0.4,0.5,0.6,0.7,0.8,0.9},
        xticklabels={0.1,0.2,0.3,0.4,0.5,0.6,0.7,0.8,0.9},
        legend style={at={(0.75,.8)},anchor=south,legend cell align=left}
        ]
    \addplot[
        dashed,
        color=blue,
        mark=*, 
        mark size=0.3pt 
    ] 
    plot coordinates {
        (.1,23.7) +- (0,0)
        (.2,15.4) +- (0,0)
        (.3,9.46) +- (0,0)
        (.4,5.00) +- (0,0)
        (.5,2.00) +- (0,0)
        (.6,0.40) +- (0,0)
        (.7,0.03) +- (0,0)
        (.8,0.01) +- (0,0)
        (.9,0.08) +- (0,00)
    };
    \addlegendentry{known utility}
    
    \addplot[
        smooth,
        color=orange,
         mark=*, 
        mark size=0.3pt, 
        error bars/.cd,
        y dir=both, 
        y explicit,
    ]
    plot coordinates {
        (.1,26.1) +- (0.11,0.11)
        (.2,17.4) +- (0.20,0.20)
        (.3,11.0) +- (0.25,0.25)
        (.4,6.2) +- (0.08,0.08)
        (.5,2.9) +- (0.06,0.06)
        (.6,1.0) +- (0.003,0.003)
        (.7,0.10) +- (0.002,0.002)
        (.8,0.10) +- (0.003,0.003)
        (.9,0.91) +- (0.007,0.007)
    };
    \addlegendentry{unknown-utility}
    
    \end{axis}
\end{tikzpicture}}
\caption{MEG vs Optimality of policy}
   \label{fig: meg vs epsilon}
\end{subfigure}
\begin{subfigure}[b]{0.33\textwidth}
\centering
   \resizebox{1\linewidth}{!}{
   \begin{tikzpicture}
    \begin{axis}[
        ylabel style={at={(axis description cs:0.08,.5)},anchor=south},
        xlabel=Length of goal region (squares),
        ylabel=MEG of optimal policy,
        xmin=.8, xmax=4.2,
        ymin=5, ymax=50,
        ytick pos=left,
        xtick pos=bottom,
        xtick={1,2,3,4},
        xticklabels={1,2,3,4},
        legend style={at={(0.75,.8)},anchor=south,legend cell align=left}
        ]
    \addplot[
        dashed,
        color=blue,
        mark=*, 
        mark size=0.3pt 
    ] 
    plot coordinates {
        (1,37.8) +- (0,0)
        (2,21.4) +- (0,0)
        (3,16.8) +- (0,0)
        (4,18.9) +- (0,0)
    };
    \addlegendentry{known utility}
    \addplot[
        smooth,
        color=orange,
          mark=*, 
        mark size=0.3pt, 
        error bars/.cd,
        y dir=both, 
        y explicit
    ]
    plot coordinates {
        (1,34.3) +- (2.6,2.6)
        (2,32.1) +- (0.5,0.5)
        (3,33.6) +- (0.5,0.5)
        (4,35.4) +- (0.6,0.6)
    };
    \addlegendentry{unknown-utility}
    
    \end{axis}
\end{tikzpicture}

    
    
\caption{MEG vs Task difficulty}
\label{fig: cliffworld expanded goal}
\end{subfigure}

\caption{(a) The CliffWorld environment. (b) MEG of $\varepsilon$-greedy policies for varying $\varepsilon$. MEG decreases as the policy gets less optimal. (c) MEG for optimal policies for various reward functions. Known-utility MEG decreases as the goal gets easier to satisfy, but unknown-utility MEG stays higher because the optimal policies also do well with respect to a narrower goal.
}
\label{fig: meg vs difficulty}
\end{figure}

\label{sec:experiments}

\section{Limitations}
\label{sec: discussion}

\label{sec: limitations}

\paragraph{Environment Access}

Although computing MEG does not require an explicit causal model (cf. \cref{sec: algorithms}), it does require the ability to run various policies in the environment of interest -- we cannot compute MEG purely from observational data.

\paragraph{Intractability} While MEG can be computed with gradient descent, doing so may well be computationally intractable in high dimensional settings.
In this paper, we conduct only preliminary experiments -- larger experiments based on real-world data may explore how serious these limitations are in practice. 

\paragraph{Choice of variables} MEG is highly dependent on which variables we choose to measure goal-directedness with respect to. At one extreme, all (deterministic) policies have maximal goal-directedness with respect to their own actions (given an expressive enough class of utility functions). This means that, for example, it would not be very insightful to compute the goal-directedness of a language model with respect to the text it outputs.\footnote{Any proposed way of measuring goal-directedness needs a way to avoid the trivial result where all policies are seen to be maximising an idiosyncratic utility function that `overfits' to that policy's behaviour. MEG avoids this by allowing us to measure goal-directedness with respect to a set of variables that excludes the system's actions (for example, states in an MDP). An alternative approach is to penalise more complex utility functions.}  At the other extreme, if a policy is highly goal-directed towards some variable $T$ that we do not think to measure, MEG may be misleadingly low.
Relatedly, MEG may also be affected by whether we use a binary variable for $T$ (or the decisions $D$) rather than a fine-grained one with many possible outcomes.
We should, therefore, think of MEG as measuring what \emph{evidence} a set of variables provides about a policy's goal-directedness.
A lack of evidence does not necessarily mean a lack of goal-directedness.

\paragraph{Distributional shifts} MEG measures how predictive a utility function is of a system's behaviour \emph{on distribution}, and distributional shifts can lead to changes in MEG. A consequence of this is that two almost identical policies can be given arbitrarily different goal-directedness scores, for example, if they take different actions in the start state of an MDP and thus spend time in different regions of the state space. It may be that by considering changes to a system's behaviour under interventions, as \citet{kenton2023discovering} do, we can distinguish ``true'' goals from spurious goals, where the former predict behaviour well across distributional shifts, while the latter happen to predict behaviour well on a particular distribution (perhaps because they correlate with true goals) \citep{di2022goal}. We leave this to future work.

\paragraph{Behavioural vs mechanistic approaches} The latter two points suggest that mechanistic approaches to understanding a system's goal-directedness could have advantages over behavioural approaches like MEG. For example, suppose we could reverse engineer the algorithms learned by a neural network. In that case, it may be possible to infer which variables the system is goal-directed with respect to and do so in a way which is robust to distributional shifts. However, mechanistic interpretability of neural networks \citep{elhage2021mathematical} is an ambitious research agenda  that is still in its infancy, and for now, behavioural approaches appear more tractable.

\paragraph{Societal implications}
An empirical measure of goal-directedness could enable researchers and companies to keep better track of how goal-directed LLMs and other systems are.
This is important for dangerous capability evaluations \citep{shevlane2023model,phuong2024evaluating} and governance \citep{shavitpractices}.
A potential downside is that it could enable bad actors to create more dangerous systems by optimising for goal-directedness. We judge that this does not contribute much additional risk, given that bad actors can already optimise a system to pursue actively harmful goals.

\section{Conclusion}
\label{sec: conclusion}
We proposed maximum entropy goal-directedness (MEG), a formal measure of goal-directedness in CIDs and CBNs, grounded in the philosophical literature and the maximum causal entropy principle. Developing such measures is important because many risks associated with advanced artificial intelligence come from goal-directed behaviour.
We proved that MEG satisfies several key desiderata, including scale invariance and being zero with respect to variables that can't be influenced, and that it gives insights about instrumental goals.
On the practical side, we adapted algorithms from the maximum causal entropy framework for inverse reinforcement learning to measure goal-directedness with respect to nonlinear utility functions in MDPs.
The algorithms handle both a single utility function and a differentiable class of utility functions.
The algorithms were used in some small-scale experiments measuring the goal-directedness of various policies in MDPs.
In future work, we plan to develop MEG's practical applications further. 
In particular, we hope to apply MEG to neural network interpretability by measuring the goal-directedness of a neural network agent with respect to a hypothesis class of utility functions taking the network's hidden states as input, thus taking more than just the system's behaviour into account.
\section{Acknowledgements}
The authors would like to thank Ryan Carey, David Hyland, Laurent Orseau, Francis Rhys Ward, and several anonymous reviewers for useful feedback. This research is supported by the UKRI Centre for Doctoral Training in Safe and Trusted AI (EP/S023356/1), and by the EPSRC grant "An Abstraction-based Technique for Safe Reinforcement Learning" (EP/X015823/1). Fox acknowledges the support of the EPSRC Centre for Doctoral Training in Autonomous Intelligent Machines and Systems (EP/S024050/1).

\bibliography{references}

\begin{thebibliography}{41}
\providecommand{\natexlab}[1]{#1}
\providecommand{\url}[1]{\texttt{#1}}
\expandafter\ifx\csname urlstyle\endcsname\relax
  \providecommand{\doi}[1]{doi: #1}\else
  \providecommand{\doi}{doi: \begingroup \urlstyle{rm}\Url}\fi

\bibitem[Bengio(2023)]{bengio}
Yoshua Bengio.
\newblock {AI Scientists: Safe and Useful AI?}
\newblock \url{https://yoshuabengio.org/2023/05/07/ai-scientists-safe-and-useful-ai/}, 2023.

\bibitem[Biehl and Virgo(2022)]{biehl2022interpreting}
Martin Biehl and Nathaniel Virgo.
\newblock Interpreting systems as solving pomdps: a step towards a formal understanding of agency.
\newblock In \emph{International Workshop on Active Inference}, pages 16--31. Springer, 2022.

\bibitem[Chan et~al.(2023)Chan, Salganik, Markelius, Pang, Rajkumar, Krasheninnikov, Langosco, He, Duan, Carroll, et~al.]{chan2023harms}
Alan Chan, Rebecca Salganik, Alva Markelius, Chris Pang, Nitarshan Rajkumar, Dmitrii Krasheninnikov, Lauro Langosco, Zhonghao He, Yawen Duan, Micah Carroll, et~al.
\newblock Harms from increasingly agentic algorithmic systems.
\newblock In \emph{Proceedings of the 2023 ACM Conference on Fairness, Accountability, and Transparency}, pages 651--666, 2023.

\bibitem[Dennett(1989)]{dennett1989intentional}
Daniel~C Dennett.
\newblock \emph{The intentional stance}.
\newblock MIT press, 1989.

\bibitem[Dennett(2017)]{dennett2017bacteria}
Daniel~C Dennett.
\newblock \emph{From bacteria to Bach and back: The evolution of minds}.
\newblock WW Norton \& Company, 2017.

\bibitem[Dennett(2023)]{dennett2023problem}
Daniel~C Dennett.
\newblock The problem with counterfeit people.
\newblock \emph{The Atlantic}, 16, 2023.

\bibitem[Di~Langosco et~al.(2022)Di~Langosco, Koch, Sharkey, Pfau, and Krueger]{di2022goal}
Lauro~Langosco Di~Langosco, Jack Koch, Lee~D Sharkey, Jacob Pfau, and David Krueger.
\newblock Goal misgeneralization in deep reinforcement learning.
\newblock In \emph{International Conference on Machine Learning}, pages 12004--12019. PMLR, 2022.

\bibitem[Elhage et~al.(2021)Elhage, Nanda, Olsson, Henighan, Joseph, Mann, Askell, Bai, Chen, Conerly, et~al.]{elhage2021mathematical}
Nelson Elhage, Neel Nanda, Catherine Olsson, Tom Henighan, Nicholas Joseph, Ben Mann, Amanda Askell, Yuntao Bai, Anna Chen, Tom Conerly, et~al.
\newblock A mathematical framework for transformer circuits.
\newblock \emph{Transformer Circuits Thread}, 1\penalty0 (1):\penalty0 12, 2021.

\bibitem[Everitt et~al.(2021{\natexlab{a}})Everitt, Carey, Langlois, Ortega, and Legg]{everitt2021agent}
Tom Everitt, Ryan Carey, Eric~D Langlois, Pedro~A Ortega, and Shane Legg.
\newblock Agent incentives: A causal perspective.
\newblock In \emph{Proceedings of the AAAI Conference on Artificial Intelligence}, volume~35, pages 11487--11495, 2021{\natexlab{a}}.

\bibitem[Everitt et~al.(2021{\natexlab{b}})Everitt, Hutter, Kumar, and Krakovna]{everitt2021reward}
Tom Everitt, Marcus Hutter, Ramana Kumar, and Victoria Krakovna.
\newblock Reward tampering problems and solutions in reinforcement learning: A causal influence diagram perspective.
\newblock \emph{Synthese}, 198\penalty0 (Suppl 27):\penalty0 6435--6467, 2021{\natexlab{b}}.

\bibitem[Fox et~al.(2023)Fox, MacDermott, Hammond, Harrenstein, Abate, and Wooldridge]{fox2023imperfect}
James Fox, Matt MacDermott, Lewis Hammond, Paul Harrenstein, Alessandro Abate, and Michael Wooldridge.
\newblock On imperfect recall in multi-agent influence diagrams.
\newblock In \emph{{\rm Proceedings Nineteenth conference on} Theoretical Aspects of Rationality and Knowledge,}, volume 379 of \emph{Electronic Proceedings in Theoretical Computer Science}, pages 201--220, 2023.

\bibitem[Gabriel et~al.(2024)Gabriel, Manzini, Keeling, Hendricks, Rieser, Iqbal, Toma{\v{s}}ev, Ktena, Kenton, Rodriguez, et~al.]{gabriel2024ethics}
Iason Gabriel, Arianna Manzini, Geoff Keeling, Lisa~Anne Hendricks, Verena Rieser, Hasan Iqbal, Nenad Toma{\v{s}}ev, Ira Ktena, Zachary Kenton, Mikel Rodriguez, et~al.
\newblock The ethics of advanced {AI} assistants.
\newblock \emph{arXiv preprint arXiv:2404.16244}, 2024.

\bibitem[Gleave and Toyer(2022)]{gleave2022primer}
Adam Gleave and Sam Toyer.
\newblock A primer on maximum causal entropy inverse reinforcement learning.
\newblock 2022.

\bibitem[Gleave et~al.(2020)Gleave, Freire, Wang, and Toyer]{seals}
Adam Gleave, Pedro Freire, Steven Wang, and Sam Toyer.
\newblock {seals}: Suite of environments for algorithms that learn specifications.
\newblock \url{https://github.com/HumanCompatibleAI/seals}, 2020.

\bibitem[Haarnoja et~al.(2017)Haarnoja, Tang, Abbeel, and Levine]{haarnoja2017reinforcement}
Tuomas Haarnoja, Haoran Tang, Pieter Abbeel, and Sergey Levine.
\newblock Reinforcement learning with deep energy-based policies.
\newblock In \emph{International conference on machine learning}, pages 1352--1361. PMLR, 2017.

\bibitem[Halpern and Kleiman-Weiner(2018)]{halpern2018towards}
Joseph Halpern and Max Kleiman-Weiner.
\newblock Towards formal definitions of blameworthiness, intention, and moral responsibility.
\newblock In \emph{Proceedings of the AAAI conference on artificial intelligence}, volume~32, 2018.

\bibitem[Jaynes(1957)]{jaynes1957information}
Edwin~T Jaynes.
\newblock Information theory and statistical mechanics.
\newblock \emph{Physical review}, 106\penalty0 (4):\penalty0 620, 1957.

\bibitem[Kenton et~al.(2023)Kenton, Kumar, Farquhar, Richens, MacDermott, and Everitt]{kenton2023discovering}
Zachary Kenton, Ramana Kumar, Sebastian Farquhar, Jonathan Richens, Matt MacDermott, and Tom Everitt.
\newblock Discovering agents.
\newblock \emph{Artificial Intelligence}, page 103963, 2023.

\bibitem[Kusner et~al.(2017)Kusner, Loftus, Russell, and Silva]{kusner2017counterfactual}
Matt~J Kusner, Joshua Loftus, Chris Russell, and Ricardo Silva.
\newblock Counterfactual fairness.
\newblock \emph{Advances in neural information processing systems}, 30, 2017.

\bibitem[MacDermott et~al.(2023)MacDermott, Everitt, and Belardinelli]{macdermott2023characterising}
Matt MacDermott, Tom Everitt, and Francesco Belardinelli.
\newblock Characterising decision theories with mechanised causal graphs.
\newblock \emph{arXiv preprint arXiv:2307.10987}, 2023.

\bibitem[Ng and Russell(2000)]{ng2000algorithms}
Andrew~Y Ng and Stuart Russell.
\newblock Algorithms for inverse reinforcement learning.
\newblock \emph{Proc. of 17th International Conference on Machine Learning, 2000}, pages 663--670, 2000.

\bibitem[Ngo et~al.(2022)Ngo, Chan, and Mindermann]{ngo2022alignment}
Richard Ngo, Lawrence Chan, and S{\"o}ren Mindermann.
\newblock The alignment problem from a deep learning perspective.
\newblock \emph{The Twelfth International Conference on Learning Representations}, 2022.

\bibitem[Oesterheld(2016)]{oesterheld2016formalizing}
Caspar Oesterheld.
\newblock Formalizing preference utilitarianism in physical world models.
\newblock \emph{Synthese}, 193\penalty0 (9):\penalty0 2747--2759, 2016.

\bibitem[Orseau et~al.(2018)Orseau, McGill, and Legg]{orseau2018agents}
Laurent Orseau, Simon~McGregor McGill, and Shane Legg.
\newblock Agents and devices: A relative definition of agency.
\newblock \emph{arXiv preprint arXiv:1805.12387}, 2018.

\bibitem[Pearl(2009)]{pearl2009causality}
Judea Pearl.
\newblock \emph{Causality}.
\newblock Cambridge university press, 2009.

\bibitem[Phuong et~al.(2024)Phuong, Aitchison, Catt, Cogan, Kaskasoli, Krakovna, Lindner, Rahtz, Assael, Hodkinson, et~al.]{phuong2024evaluating}
Mary Phuong, Matthew Aitchison, Elliot Catt, Sarah Cogan, Alexandre Kaskasoli, Victoria Krakovna, David Lindner, Matthew Rahtz, Yannis Assael, Sarah Hodkinson, et~al.
\newblock Evaluating frontier models for dangerous capabilities.
\newblock \emph{arXiv preprint arXiv:2403.13793}, 2024.

\bibitem[Richens and Everitt(2024)]{richens2024robust}
Jonathan Richens and Tom Everitt.
\newblock Robust agents learn causal world models.
\newblock \emph{The Twelfth International Conference on Learning Representations}, 2024.

\bibitem[Richens et~al.(2022)Richens, Beard, and Thompson]{richens2022counterfactual}
Jonathan Richens, Rory Beard, and Daniel~H Thompson.
\newblock Counterfactual harm.
\newblock \emph{Advances in Neural Information Processing Systems}, 35:\penalty0 36350--36365, 2022.

\bibitem[Schlosser(2019)]{sep-agency}
Markus Schlosser.
\newblock {Agency}.
\newblock In Edward~N. Zalta, editor, \emph{The {Stanford} Encyclopedia of Philosophy}. Metaphysics Research Lab, Stanford University, {W}inter 2019 edition, 2019.

\bibitem[Shavit et~al.(2023)Shavit, Agarwal, Brundage, Adler, O’Keefe, Campbell, Lee, Mishkin, Eloundou, Hickey, et~al.]{shavitpractices}
Yonadav Shavit, Sandhini Agarwal, Miles Brundage, Steven Adler, Cullen O’Keefe, Rosie Campbell, Teddy Lee, Pamela Mishkin, Tyna Eloundou, Alan Hickey, et~al.
\newblock Practices for governing agentic {AI} systems.
\newblock \emph{Research Paper, OpenAI, December}, 2023.

\bibitem[Shevlane et~al.(2023)Shevlane, Farquhar, Garfinkel, Phuong, Whittlestone, Leung, Kokotajlo, Marchal, Anderljung, Kolt, et~al.]{shevlane2023model}
Toby Shevlane, Sebastian Farquhar, Ben Garfinkel, Mary Phuong, Jess Whittlestone, Jade Leung, Daniel Kokotajlo, Nahema Marchal, Markus Anderljung, Noam Kolt, et~al.
\newblock Model evaluation for extreme risks.
\newblock \emph{arXiv preprint arXiv:2305.15324}, 2023.

\bibitem[Virgo et~al.(2021)Virgo, Biehl, and McGregor]{virgo2021interpreting}
Nathaniel Virgo, Martin Biehl, and Simon McGregor.
\newblock Interpreting dynamical systems as bayesian reasoners.
\newblock In \emph{Joint European conference on machine learning and knowledge discovery in databases}, pages 726--762. Springer, 2021.

\bibitem[Wachter et~al.(2017)Wachter, Mittelstadt, and Russell]{wachter2017counterfactual}
Sandra Wachter, Brent Mittelstadt, and Chris Russell.
\newblock Counterfactual explanations without opening the black box: Automated decisions and the {GDPR}.
\newblock \emph{Harv. JL \& Tech.}, 31:\penalty0 841, 2017.

\bibitem[Wang et~al.(2024)Wang, Ma, Feng, Zhang, Yang, Zhang, Chen, Tang, Chen, Lin, et~al.]{wang2024survey}
Lei Wang, Chen Ma, Xueyang Feng, Zeyu Zhang, Hao Yang, Jingsen Zhang, Zhiyuan Chen, Jiakai Tang, Xu~Chen, Yankai Lin, et~al.
\newblock A survey on large language model based autonomous agents.
\newblock \emph{Frontiers of Computer Science}, 18\penalty0 (6):\penalty0 1--26, 2024.

\bibitem[Ward et~al.(2024{\natexlab{a}})Ward, Toni, Belardinelli, and Everitt]{ward2024honesty}
Francis Ward, Francesca Toni, Francesco Belardinelli, and Tom Everitt.
\newblock Honesty is the best policy: defining and mitigating {AI} deception.
\newblock \emph{Advances in Neural Information Processing Systems}, 36, 2024{\natexlab{a}}.

\bibitem[Ward et~al.(2024{\natexlab{b}})Ward, MacDermott, Belardinelli, Toni, and Everitt]{ward2024reasons}
Francis~Rhys Ward, Matt MacDermott, Francesco Belardinelli, Francesca Toni, and Tom Everitt.
\newblock The reasons that agents act: Intention and instrumental goals.
\newblock In \emph{Proceedings of the 23rd International Conference on Autonomous Agents and Multiagent Systems}, pages 1901--1909, 2024{\natexlab{b}}.

\bibitem[Weirich(2020)]{sep-decision-causal}
Paul Weirich.
\newblock {Causal Decision Theory}.
\newblock In Edward~N. Zalta, editor, \emph{The {Stanford} Encyclopedia of Philosophy}. Metaphysics Research Lab, Stanford University, {W}inter 2020 edition, 2020.

\bibitem[Wulfmeier et~al.(2015)Wulfmeier, Ondruska, and Posner]{wulfmeier2015maximum}
Markus Wulfmeier, Peter Ondruska, and Ingmar Posner.
\newblock Maximum entropy deep inverse reinforcement learning.
\newblock \emph{arXiv preprint arXiv:1507.04888}, 2015.

\bibitem[Xu and Rivera(2024)]{xu2024towards}
Dylan Xu and Juan-Pablo Rivera.
\newblock Towards measuring goal-directedness in {AI} systems.
\newblock \emph{arXiv preprint arXiv:2410.04683}, 2024.

\bibitem[Ziebart(2010)]{ziebart2010thesis}
Brian~D Ziebart.
\newblock \emph{Modeling purposeful adaptive behavior with the principle of maximum causal entropy}.
\newblock Carnegie Mellon University, 2010.

\bibitem[Ziebart et~al.(2010)Ziebart, Bagnell, and Dey]{ziebart2010paper}
Brian~D Ziebart, J~Andrew Bagnell, and Anind~K Dey.
\newblock Modeling interaction via the principle of maximum causal entropy.
\newblock In \emph{Proceedings of the 27th International Conference on International Conference on Machine Learning}, pages 1255--1262, 2010.

\end{thebibliography}

\newpage

\appendix

\section{Comparison to Discovering Agents}
\label{sec: discovering-agents}
This paper is inspired by \citet{kenton2023discovering}, who proposed a causal discovery algorithm for identifying agents in causal models, inspired by Daniel Dennett's view of agents as systems "moved by reasons" \citep{dennett1989intentional}. Our approach has several advantages over theirs, which we enumerate below.

\textbf{Mechanism variables.}
\cite{kenton2023discovering} assume access to a \emph{mechanised structural causal model}, which augments an ordinary causal model with \emph{mechanism variables} which parameterise distributions of ordinary object-level variables. An agent is defined as a system that adapts to changes in the mechanism of its environment. However, the question of what makes a variable a mechanism is left undefined, and indeed, the same causal model can be expressed either with or without mechanism variables, leading their algorithm to give a different answer. For example, \cref{ex:mouse} has identical causal structure to the mouse example in \cite{kenton2023discovering}, but without any variables designated as mechanisms. Their algorithm says the version with mechanism variables contains an agent while the version without does not, despite them being essentially the same causal model. \Cref{fig: discovering agents} shows our example depicted as a mechanised structural causal model. We fix this arbitrariness by making our definition in ordinary causal Bayesian networks.

\begin{figure}
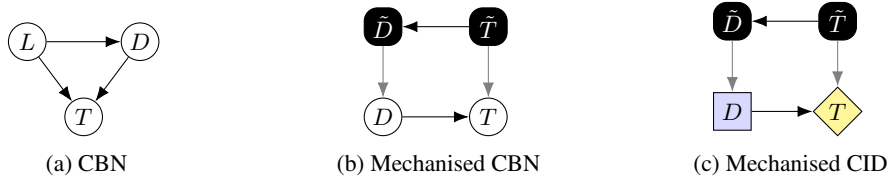

\begin{subfigure}[b]{0.33\linewidth}
\centering
\begin{influence-diagram}
\node (O) [] {$L$};
\node (D) [right = 1.5cm of O] {$D$};
\node (T) [below right = 1cm and .75cm of O] {$T$};
\edge {O} {D};
\edge {O} {T};
\edge {D} {T};
\end{influence-diagram}
\caption{CBN}
\label{subfig:discover_cbn}
\end{subfigure}
\begin{subfigure}[b]{0.33\linewidth}
\centering
\begin{influence-diagram}
\node (D) [] {$D$};
\node (T) [right = of D] {$T$};
\node (Dmec) [relevanceb, above = 1.2cm of D] {$\Tilde{D}$};
\node (Tmec) [relevanceb, above = 1.2cm of T] {$\Tilde{T}$};
\edge {D} {T};
\edge[gray] {Dmec} {O};
\edge[gray] {Tmec} {T};
\edge {Tmec} {Dmec};
\end{influence-diagram}
\caption{Mechanised CBN}
\label{subfig:discover_mcbn}
\end{subfigure}
\begin{subfigure}[b]{0.33\linewidth}
\centering
\begin{influence-diagram}
\node (D) [decision] {$D$};
\node (T) [utility,right = of D] {$T$};
\node (Dmec) [relevanceb, above = 1.2cm of D] {$\Tilde{D}$};
\node (Tmec) [relevanceb, above = 1.2cm of T] {$\Tilde{T}$};
\edge {D} {T};
\edge[gray] {Dmec} {O};
\edge[gray] {Tmec} {T};
\edge {Tmec} {Dmec};
\end{influence-diagram}
\caption{Mechanised CID}
\label{subfig:discover_mcid}
\end{subfigure}
\caption{\Cref{ex:mouse} can be equally well represented with a CBN (a) or mechanised CBN (b), but \cite{kenton2023discovering}'s algorithm only identifies an agent in (b). (c) shows the resulting mechanised CID. In contrast, MEG is unchanged between (b) and (c). Note also that the causal discovery algorithm identifies $T$ as a utility variable, where where MEG adds a new utility child to $T$.}
\label{fig: discovering agents}
\end{figure}

\textbf{Utility variables.} Their algorithm assumes that some variables in the model represent agents' utilities. We bring this more in line with the philosophical motivation by treating utilities as hypothesised mental states with which we augment our model.

\textbf{Predictive accuracy.} \cite{kenton2023discovering}'s approach formalises Dennett's idea of agents as systems ``moved by reasons''. We build on this idea but bring it more in line with Dennett's notion of what it means for a system to be moved by a reason -- that the reason is useful for predicting its behaviour.

\textbf{Gradualist vs Essentialist.} The predictive error viewpoint gives us a continuous measure of goal-directedness rather than a binary notion of agency, which is more befitting of the gradualist view of agents which inspired it.

\textbf{Practicality.} Their algorithm is theoretical rather than something that can be applied in practice. Instead, ours is straightforward to implement, as we demonstrate in \cref{sec:experiments}. This opens up a range of potential applications, including behavioural evaluations and interpretability of ML models.

\textbf{Interventional distributions.}
The primary drawback of MEG is that it doesn't necessarily generalise outside of the distribution.
Running MEG on interventional distributions may fix this.
We leave an extension of MEG to interventional distributions for future work.

\label{appendix: limitations}
\section{Proofs of MEG Properties}
\label{appendix: properties}

\invarianceprop*

\begin{proof}

    Since MEG is defined by maximising over a maximum entropy policy set, showing that two utility functions have the same maximum entropy policy set is enough to show that they result in the same MEG for every policy.

    If $\pi\in\Pi^{\me}_{\mathcal{U}_1}$, then $\pi\in\Pi^{\me}_{\mathcal{U}_1, u_1}$ for some $u_1\in\att(\mathcal{U}_1)$, i.e. $\pi$ is a maximum entropy policy satisfying the constraint that $\E_\pi\left[\mathcal{U}_1\right]=u_1$. $\pi$ thus also satisfies the constraint that $\E_\pi\left[\mathcal{U}_2\right]= u_2\coloneqq a\cdot u_1 +b$. Note that any policy satisfying  $\E_\pi\left[\mathcal{U}_2\right]= u_2$ also satisfies $\E_\pi\left[\mathcal{U}_1\right]=u_1$ because the map $x\mapsto a \cdot x + b$ is injective (since $a\neq 0$). Thus $\pi$ must also be a maximum entropy policy satisfying $\E_\pi\left[\mathcal{U}_2\right]= u_2$, and so $\pi\in\Pi^{\me}_{\mathcal{U}_2, u_2}$ and thus $\pi\in\Pi^{\me}_{\mathcal{U}_2}$.

    The converse is similar.

\end{proof}
\boundsprop*

\begin{proof}

Recall that

\begin{equation*}
    \meg_{\mathcal{U}}(\pi)
=\max_{\pi^{\me}\in\Pi^{\me}_{{\mathcal{U}}}}
\E_\pi\left[\sum_{D\in\bm{D}}\left(\log \pi^{\me}(D\mid \Pa_D)-\log\frac{1}{\mid \dm(D)\mid}\right)\right].
\end{equation*}

For the lower bound, first note that the uniform policy $\pi^{\text{unif}}$ is always an element of $\Pi^{\me}_{{\mathcal{U}}}$, since it is the maximum entropy way to attain a utility of $\mathbb{E}_{\pi^{\text{unif}}}\left[\mathcal{U}\right]$. It follows that 

\begin{align}
    \meg_{\mathcal{U}}(\pi)
\geq&\E_\pi\left[\sum_{D\in\bm{D}}\left(\log \pi^{\text{unif}}(D\mid \Pa_D)-\log\frac{1}{\mid \dm(D)\mid}\right)\right] \label{quantity}
    \\&=\E_\pi\left[\sum_{D\in\bm{D}}\left(\log\frac{1}{\mid \dm(D)\mid}-\log\frac{1}{\mid \dm(D)\mid}\right)\right]
    \\&=0,
\end{align}

which shows both the lower bound and the fact that we attain it when $\pi=\pi^{\text{unif}}$.

For the upper bound, note that the $\log \pi^{\me}(D\mid \Pa_D)$ term is at most $0$, so  \begin{align}
    \meg_{\mathcal{U}}(\pi)
\leq&\E_\pi\left[\sum_{D\in\bm{D}}\left(0-\log\frac{1}{\mid \dm(D)\mid}\right)\right] \label{quantity}
    \\&=\sum_{D\in\bm{D}}\log(|\dm(D)|)
\end{align}

To see that $\pi$ being a unique optimal (or anti-optimal) policy is a sufficient condition for attaining the upper bound, note that there always exists a deterministic optimal and anti-optimal policy in a CID, so a unique such policy must be deterministic. Further, $\pi$ must be in $\Pi^{\me}_\mathcal{U}$, since there is no higher entropy way to get the same expected utility. Then since $\mathbb{E}_\pi \left[\log \pi^{\me} (D\mid \Pa_D)\right] =0$, choosing $\pi^{\me}=\pi$ obtains the upper bound.

\end{proof}

\nogdprop*

\begin{proof}

Since $\mathcal{U}$ is not a descendant of $\bm{D}$, it follows from the Markov property of causal Bayesian networks that 
$\mathcal{U}\perp \bm{D}\mid \Pa_{\bm{D}}$. That means all policies achieve the same expected utility $u$. So, the maximum entropy policy set $\Pi^{\me}_{\mathcal{U}}$ contains only the uniform policy. We get that

$$\meg_{\mathcal{U}}(\pi)=
- \E\left[\sum_{D\in\bm{D}}\log\frac{1}{\mid \dm(D)\mid}-\log\frac{1}{\mid \dm(D)\mid}\right] = 0. \qedhere$$
\end{proof}

\proppseudo*

\begin{proof}

We will show that the maximum entropy policy set $\Pi^{\me}_{\bm{\mathcal{U}^T}}$ (where $\bm{\mathcal{U}^T}$ is the set of all utility functions over $\bm{T}$) is a subset of $\Pi^{\me}_{\bm{\mathcal{U}^S}}$, so the maximum predictive accuracy taken over the latter upper bounds the maximum predictive accuracy taken over the former.

Let $\pi\in\Pi^{\me}_{\bm{\mathcal{U}^T}},$ so $\pi=\pi^{\me}_{\mathcal{U},u}$ for some $\mathcal{U}^T\in\bm{\mathcal{U}^T}$. If we can find a utility function $\mathcal{U}^S\in\bm{\mathcal{U}^S}$ such that for all $\pi$, $\E_\pi\left[\mathcal{U}^S\right]=\E_\pi\left[\mathcal{U}^T\right]$, then a maximum entropy policy with $\E_\pi\left[\mathcal{U}^T\right]=u$ must also be a maximum entropy policy with $\E_\pi\left[\mathcal{U}^S\right]=u$. It would follow that $\pi\in\Pi^{\me}_{\bm{\mathcal{U}^T}}$ and so $\Pi^{\me}_{\bm{\mathcal{U}^S}}\subseteq\Pi^{\me}_{\bm{\mathcal{U}^T}}$.

To construct such a utility function, let $\mathcal{U}^{\bm{S}}(\bm{s}) =\sum_t P(\bm{T}=\bm{t} \mid \bm{S}=\bm{s})\mathcal{U}^{\bm{T}}(t)$. Note that since $\bm{D}\perp\bm{T}\mid\bm{S}$, $P(\bm{T}=\bm{t} \mid \bm{S}=\bm{s})$ is not a function of $\pi$. Then for any $\pi$,

\begin{align*}
     \E_\pi \left[\mathcal{U}^{\bm{T}}\right] 
     & = \sum_{\bm{t}} P_\pi(\bm{t}) \mathcal{U}^{\bm{T}}(\bm{t})
     \\ & = \sum_{\bm{s}} P_\pi(\bm{s})\sum_{\bm{t}} P_\pi(\bm{t}\mid \bm{s}) \mathcal{U}^{\bm{T}}(\bm{t}) 
     \\ & = \sum_{\bm{s}} P_\pi(\bm{s})\sum_{\bm{t}} P(\bm{t}\mid \bm{s}) \mathcal{U}^{\bm{T}}(\bm{t})  \text{       \ \ \ \ (since $\bm{D}\perp \bm{T} \mid \bm{S}$)}
     \\ & =\sum_s P_\pi(\bm{s})\mathcal{U}^{\bm{S}}(\bm{s})
     \\& = \E_\pi \left[\mathcal{U}^{\bm{S}}\right] .
 \end{align*}

\end{proof}

\section{Proof of \cref{thm:maxent policy mdp}}

\begin{lemma}
\label{policylemma}
Let $\pi_{\beta,\mathcal{U}}$ denote a soft-optimal with respect to utility function $\mathcal{U}$, defined as in \cref{thm:maxent policy mdp}, with rationality parameter $\beta$. Then for any $\alpha\in\mathbb{R}$, we have that $\pi_{\alpha\cdot \beta,\mathcal{U}} =\pi_{\beta,\alpha\cdot\mathcal{U}}$.
\end{lemma}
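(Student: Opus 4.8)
The plan is to reduce the claim to a homogeneity property of the soft $Q$-function and then read off the two policies from their shared softmax form. Write $Q^{\textnormal{soft}}_{\beta,t}[\mathcal{U}]$ for the soft $Q$-function of \cref{def: soft Q} computed with the utility variables summing to $\mathcal{U}$ and rationality parameter $\beta$, so that by \cref{thm:maxent policy mdp} the policy $\pi_{\beta,\mathcal{U}}$ is, at each step $t$, the softmax of $\beta\cdot Q^{\textnormal{soft}}_{\beta,t}[\mathcal{U}](\cdot\mid\pa_t)$ over $\dm(D_t)$ (and is $\pi^{\textnormal{unif}}$ when $\beta=0$).

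First I would dispose of the degenerate parameter values, since \cref{def: soft Q} only applies for $\beta\in\mathbb{R}\setminus\{0\}$. If $\alpha=0$, then $\pi_{\alpha\beta,\mathcal{U}}=\pi_{0,\mathcal{U}}=\pi^{\textnormal{unif}}$ by definition, while $\pi_{\beta,\alpha\mathcal{U}}=\pi_{\beta,0}$ is also uniform: with $\mathcal{U}\equiv 0$ the recursion adds, at each step, only $\tfrac1\beta\logsumexp$ of a term that does not depend on $d_t$, so $Q^{\textnormal{soft}}_{\beta,t}[0]$ is constant in $d_t$ and its softmax is uniform. The case $\beta=0$ is symmetric (both sides equal $\pi^{\textnormal{unif}}$), and the case $\beta=\pm\infty$ follows because the soft-optimal policy is then the uniform distribution over $\argmax_{d_t}$ (resp.\ over the minimisers) of the ordinary $Q$-function, which is unchanged when $\mathcal{U}$ is multiplied by $\alpha>0$ and reflected into $\beta=\mp\infty$ when $\alpha<0$ — in all cases matching $\pi_{\alpha\beta,\mathcal{U}}$. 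Henceforth assume $\alpha,\beta\in\mathbb{R}\setminus\{0\}$.

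The core step is the scaling identity
\begin{equation}\label{eq:Qscaling}
Q^{\textnormal{soft}}_{\beta,t}[\alpha\mathcal{U}](d_t\mid\pa_t)=\alpha\cdot Q^{\textnormal{soft}}_{\alpha\beta,t}[\mathcal{U}](d_t\mid\pa_t)\qquad\text{for every }t,
\end{equation}
which I would prove by backward induction on $t$. The base case $t=n$ is immediate: $Q^{\textnormal{soft}}_{\beta,n}[\alpha\mathcal{U}](d_n\mid\pa_n)=\E[\alpha U_n\mid d_n,\pa_n]=\alpha\,\E[U_n\mid d_n,\pa_n]=\alpha\,Q^{\textnormal{soft}}_{\alpha\beta,n}[\mathcal{U}](d_n\mid\pa_n)$, since the last step carries no $\beta$-dependence. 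For $t<n$, substitute the induction hypothesis for $t+1$ into the recursion of \cref{def: soft Q} and use linearity of expectation together with the elementary identity $\tfrac1\beta\logsumexp(\beta\alpha\,x)=\alpha\cdot\tfrac1{\alpha\beta}\logsumexp(\alpha\beta\,x)$ (valid since $\alpha\neq 0$), which converts $\E\big[\alpha U_t+\tfrac1\beta\logsumexp(\beta\cdot\alpha Q^{\textnormal{soft}}_{\alpha\beta,t+1}[\mathcal{U}](\cdot\mid\Pa_{D_{t+1}}))\big]$ into $\alpha\cdot\E\big[U_t+\tfrac1{\alpha\beta}\logsumexp(\alpha\beta\cdot Q^{\textnormal{soft}}_{\alpha\beta,t+1}[\mathcal{U}](\cdot\mid\Pa_{D_{t+1}}))\big]=\alpha\cdot Q^{\textnormal{soft}}_{\alpha\beta,t}[\mathcal{U}](d_t\mid\pa_t)$. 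This closes the induction.

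Finally I would plug \eqref{eq:Qscaling} into the softmax form of \cref{thm:maxent policy mdp}: at each $t$, $\pi_{\beta,\alpha\mathcal{U}}(d_t\mid\pa_t)\propto\exp\!\big(\beta\cdot Q^{\textnormal{soft}}_{\beta,t}[\alpha\mathcal{U}](d_t\mid\pa_t)\big)=\exp\!\big(\alpha\beta\cdot Q^{\textnormal{soft}}_{\alpha\beta,t}[\mathcal{U}](d_t\mid\pa_t)\big)$, which is precisely the unnormalised weight defining $\pi_{\alpha\beta,\mathcal{U}}(d_t\mid\pa_t)$; the normalising sums over $d_t'\in\dm(D_t)$ coincide for the same reason, so the two policies agree at every step and hence as policies. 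I do not expect a genuine obstacle here: the only things needing care are the bookkeeping of where each copy of $\beta$ lands in the induction — it is exactly the cancellation between the explicit $\tfrac1\beta$ prefactor and the $\beta$ inside the $\logsumexp$ in \cref{def: soft Q} that makes the identity go through — and the separate treatment of the boundary values $\beta\in\{0,\pm\infty\}$, which fall outside the domain of \cref{def: soft Q}.
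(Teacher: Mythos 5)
Your proof is correct and takes essentially the same route as the paper's: the same scaling identity $Q^{\textnormal{soft}}_{\beta,t}[\alpha\mathcal{U}]=\alpha\cdot Q^{\textnormal{soft}}_{\alpha\beta,t}[\mathcal{U}]$ established by backward induction via the $\frac{1}{\beta}\logsumexp(\beta\alpha\, x)=\alpha\cdot\frac{1}{\alpha\beta}\logsumexp(\alpha\beta\, x)$ rescaling, followed by the same substitution into the softmax form. The only difference is cosmetic: you spell out the degenerate cases ($\alpha=0$, $\beta=0$, $\beta=\pm\infty$) in more detail than the paper, which simply notes that both policies are uniform when $\alpha$ or $\beta$ is zero.
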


\begin{proof}
If either $\alpha$ or $\beta$ are equal to $0$, both policies are uniform and we are done. Otherwise, write $Q^{\text{soft}}_{\beta, \mathcal{U}}$ for the soft-Q function for $\mathcal{U}$ with rationality parameter $\beta.$ We first show that $Q^{\text{soft}}_{\beta, \alpha\cdot\mathcal{U}}=\alpha \cdot Q^{\text{soft}}_{\alpha\cdot\beta, \mathcal{U}}$ by backwards induction.

First, $Q^{\text{soft}}_{\beta, \alpha\cdot\mathcal{U}, n}= \mathbb{E}\left[\alpha\cdot U_n\mid d_n, \pa_n\right] = \alpha\mathbb{E}\left[ U_n\mid d_n, \pa_n\right] =\alpha\cdot Q^{\text{soft}}_{\alpha\cdot\beta, \mathcal{U},n }$.

And for $t<n$, assuming the inductive hypothesis holds for $t+1$,

\begin{align*}
Q_{\beta, \alpha\cdot\mathcal{U}, t}^{\textnormal{soft}}(d_t\mid \pa_t) 
&= \mathbb{E}\left[\alpha\cdot U_t + \frac{1}{\beta}\logsumexp (\beta \cdot Q_{\beta,\alpha\cdot\mathcal{U}, t+1}^{\textnormal{soft}}(\cdot\mid\Pa_{D_{t+1}})) \middle| d_t, \pa_{t+1} \right]
\\& = \alpha\cdot\mathbb{E}\left[ U_t + \frac{1}{\alpha\cdot\beta}\logsumexp (\beta \cdot \alpha \cdot Q_{\alpha\cdot \beta,\mathcal{U}, t+1}^{\textnormal{soft}}(\cdot\mid\Pa_{D_{t+1}})) \middle| d_t, \pa_{t+1} \right]
\\& = \alpha\cdot\mathbb{E}\left[ U_t + \frac{1}{\alpha\cdot\beta}\logsumexp (\alpha\cdot\beta \cdot Q_{\alpha\cdot \beta,\mathcal{U}, t+1}^{\textnormal{soft}}(\cdot\mid\Pa_{D_{t+1}})) \middle| d_t, \pa_{t+1} \right]
\\& = \alpha\cdot Q^{\text{soft}}_{\alpha\cdot\beta, \mathcal{U},t }.
\end{align*}

Then, substituting this into the definition of $\pi_{\alpha\cdot\beta, \mathcal{U}}$ we get 

\begin{align*}
\pi_{\beta, \alpha\cdot\mathcal{U}}
&= \textnormal{softmax}\left(\beta\cdot Q^{\textnormal{soft}}_{\beta, \alpha\cdot\mathcal{U}}\right)
\\&= \textnormal{softmax}\left(\beta\cdot\alpha\cdot Q^{\textnormal{soft}}_{ \alpha\cdot\beta, \mathcal{U}}\right)
\\&= \textnormal{softmax}\left(\alpha\cdot \beta\cdot Q^{\textnormal{soft}}_{ \alpha\cdot\beta, \mathcal{U}}\right)
\\& = \pi_{\alpha\cdot\beta, \mathcal{U}}.
\end{align*}

\end{proof}

\maxentpolicytheorem*

\begin{proof}
In an MDP, the expected utility is a linear function of the policy, so the attainable utility set is a closed interval $\att(\mathcal{U})=[u_{\min},u_{\max}]$. We first consider the case where $u\in(u_{\min},u_{\max})$.

In this case, we are seeking the maximum entropy policy in an MDP with a linear constraint satisfiable by a full support policy (since $u$ is an interior point), so we can invoke Ziebart's result on the form of such policies \citep{ziebart2010thesis,ziebart2010paper,gleave2022primer}.
In particular, our feature is the utility $\mathcal{U}$. We get that the maximum entropy policy is a soft-Q policy for a utility function $\beta\cdot\mathcal{U}$ with a rationality parameter of $1$, where $\beta=\argmax_{\beta'\in\mathbb{R}} \sum_t \E_\pi\left[\log(\pi^{\text{soft}}_{\beta'}(d_t\mid\pa_t))\right]$. By \cref{policylemma} this can be restated as a soft-Q policy for $\mathcal{U}$ with a rationality parameter of $\beta$. It follows from Ziebart that $\beta=\argmax_{\beta'\in\mathbb{R}}\pi^{\text{soft}}_{\beta'}$, and allowing $\beta=\infty$ or $-\infty$ does not change the argmax.

In the case where $u\in \{u_{\min},u_{\max}\}$, it's easy to show that the maximum entropy policy which attains $u$ randomises uniformly between maximal value actions (for $u_{\max}$) or minimal value actions (for $u_{\min}$). These policies can be expressed as $\lim_{\beta\to\infty}\pi^{\text{soft}}_{\beta}$ and $\lim_{\beta\to-\infty}\pi^{\text{soft}}_{\beta}$ respectively.
\end{proof}

\section{Experimental Details}
\subsection{Tables of results}

\[
\begin{array}{c c c}
\hline
 & \text{Known Utility} & \text{Unknown Utility} \\ \hline
k=1 & 37.8 & 34.3\pm2.6 \\ \hline
k=2 & 21.4 & 32.1\pm0.5 \\ \hline
k=3 & 16.8 & 33.6\pm0.5 \\ \hline
k=4 & 18.9 & 35.4\pm0.6 \\ \hline
\end{array}
\]

\[
\begin{array}{c c c }
\hline
 & \text{Known Utility} & \text{Unknown Utility} \\ \hline
\varepsilon=0.1 & 2.4 & 26.1\pm0.11 \\ \hline
\varepsilon=0.2 & 1.5 & 17.4\pm0.2 \\ \hline
\varepsilon=0.3 & 0.95 & 11.0\pm0.25 \\ \hline
\varepsilon=0.4 & 0.50 & 6.2\pm0.08 \\ \hline
\varepsilon=0.5 & 0.20 & 2.9\pm0.06 \\ \hline
\varepsilon=0.6 & 0.04 & 1.0\pm0.003 \\ \hline
\varepsilon=0.7 & 0.003 & 0.10\pm0.002 \\ \hline
\varepsilon=0.8 & 0.001 & 0.10\pm0.003 \\ \hline
\varepsilon=0.9 & 0.008 & 0.091\pm0.007 \\ \hline
\end{array}
\]i

\subsection{Visualising optimal policies for different lengths of goal region.}

\begin{figure}[h]
\label{fig:occupancy_measures}
\centering
\begin{subfigure}{0.45\textwidth}
   \centering
   \includegraphics[width=\linewidth]{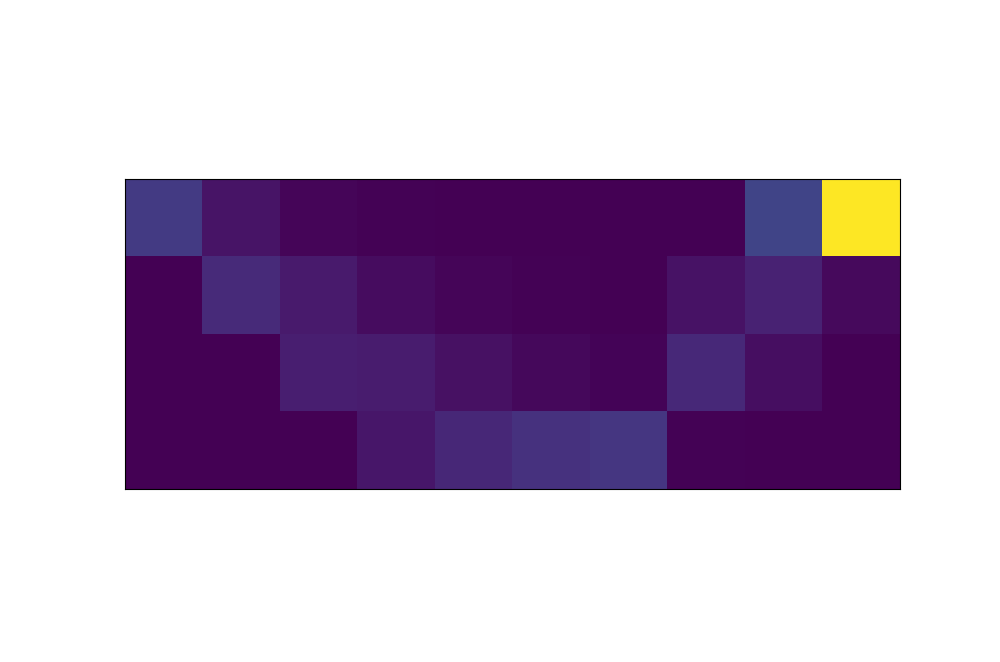}
   \caption{Occupancy measures of optimal policy when $k=1$.}
   \label{fig:k1}
\end{subfigure}
\hfill 
\begin{subfigure}{0.45\textwidth}
   \centering
   \includegraphics[width=\linewidth]{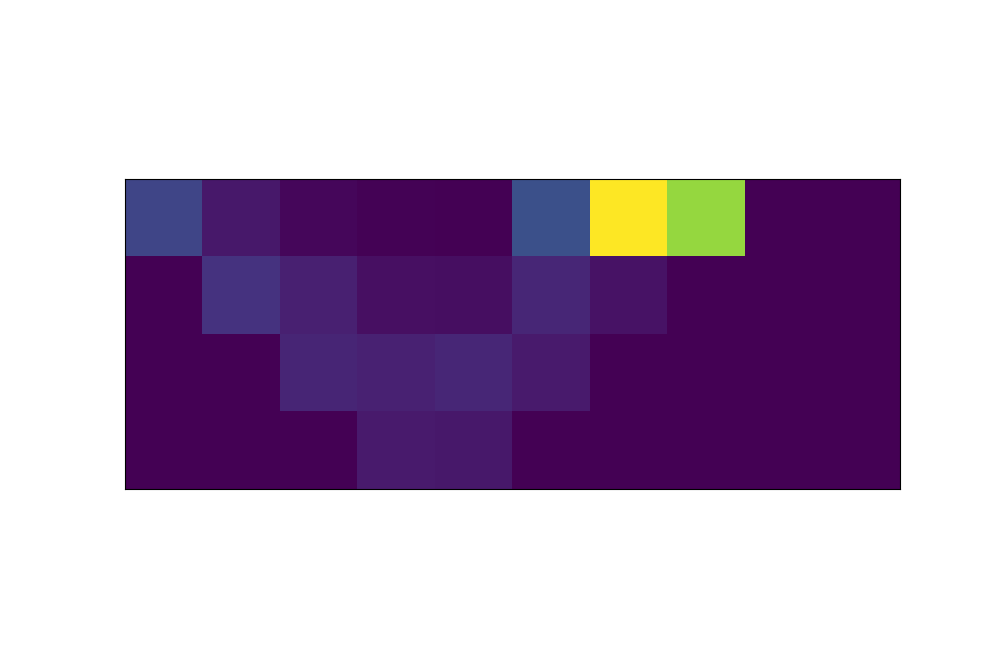}
   \caption{Occupancy measures of optimal policy when $k=4$.}
   \label{fig:k4}
\end{subfigure}
\caption{Occupancy measures}
\end{figure}

\Cref{fig:k1} and  \Cref{fig:k4} show the occupancy measures for an optimal policy for k=1 and k=4 respectively, where $k$ is the length of the goal region in squares. Although the goal region is larger in the latter case, the optimal policy consistently aims for the same sub-region. This explains why unknown-utility MEG is higher than MEG with respect to the environment reward. The policy does just as well on a utility function whose goal-region is limited to the nearer goal squares as it does on the environment reward, but fewer policies do well on this utility function, so doing well on it constitutes more evidence for goal-directedness.

\subsection{Further details}

The experiments were carried out on a personal laptop with the following specs:
\label{appendix: experimental details}
\begin{itemize}
    \item \textit{Hardware model:} LENOVO20N2000RUK
    \item \textit{Processor:} Intel(R) Core(TM) i7-8665U CPU @ 1.90GHz, 2112 Mhz, 4 Core(s), 8 Logical Processor(s)
    \item \textit{Memory:} 24.0 GB

\end{itemize}
 
We used an environment from the SEALS library\footnote{https://github.com/HumanCompatibleAI/seals}, and adapted an algorithm from the imitation library\footnote{https://github.com/HumanCompatibleAI/imitation}. Both are released under the MIT license. 

For information on hyperparameters, see the code.

\end{document}